\newtheorem*{rep@theorem}{\rep@title}
\newcommand{\newreptheorem}[2]{%
\newenvironment{rep#1}[1]{%
 \def\rep@title{#2 \ref{##1}}%
 \begin{rep@theorem}}%
 {\end{rep@theorem}}}
\newtheorem{theorem}{Theorem}
\newtheorem{lemma}[theorem]{Lemma}
\newtheorem{corollary}[theorem]{Corollary}
\theoremstyle{definition}
\newcounter{saveenumi}
\algrenewcommand\algorithmicrequire{\textbf{Parameters:}}
\algrenewcommand\algorithmicensure{\textbf{Initialization:}}
\newcommand{\calD}{\mathcal{D}}
\newcommand{\calF}{\mathcal{F}}
\newcommand{\calG}{\mathcal{G}}
\newcommand{\calH}{\mathcal{H}}
\newcommand{\calR}{\mathcal{R}}
\newcommand{\calS}{\mathcal{S}}
\newcommand{\calV}{\mathcal{V}}
\newcommand{\calW}{\mathcal{W}}
\newcommand{\mathN}{\mathbb{N}}
\newcommand{\mathP}{\mathbb{P}}
\newcommand{\mathR}{\mathbb{R}}
\newcommand{\mathZ}{\mathbb{Z}}
\newcommand{\vecF}{\mathbf{F}}
\newcommand{\vecU}{\mathbf{U}}
\newcommand{\vecV}{\mathbf{V}}
\newcommand{\vecX}{\mathbf{X}}
\newcommand{\vecAlpha}{\boldsymbol{\alpha}}
\newcommand{\vecBeta}{\boldsymbol{\beta}}
\newcommand{\vece}{\mathbf{e}}
\newcommand{\vecf}{\mathbf{f}}
\newcommand{\vecs}{\mathbf{s}}
\newcommand{\vecu}{\mathbf{u}}
\newcommand{\vecv}{\mathbf{v}}
\newcommand{\vecx}{\mathbf{x}}
\newcommand{\vecz}{\mathbf{z}}
\newcommand{\vecxi}{\bm{\xi}}
\newcommand{\hatL}{\hat{L}}
\newcommand{\tildeO}{\tilde{O}}
\newcommand{\removed}[1]{}
\newcommand{\inner}[2]{\left\langle #1, #2 \right\rangle}
\newcommand{\norm}[1]{\left\lVert{#1}\right\rVert}
\newcommand{\abs}[1]{{\left\lvert{#1}\right\rvert}}
\newcommand{\R}{\mathbb{R}}
\newcommand{\eps}{\epsilon}
\newcommand{\E}{\mathbb{E}}
\newcommand{\bbf}{\bm{f}}
\newcommand{\bxi}{\bm{\xi}}
\newcommand{\balpha}{\bm{\alpha}}
\newcommand{\bbeta}{\bm{\beta}}
\newcommand{\beps}{\bm{\eps}}
\newcommand{\ceil}[1]{\left\lceil#1\right\rceil}
\newcommand{\inp}[2]{\left\langle #1,#2\right\rangle}
\renewcommand{\E}{\mathop\mathbb{E}}
\newcommand{\figdir}{figures}
\newcommand{\TODO}[1]{\textcolor{red}{#1}}
\title{Towards Understanding the Role of Over-Parametrization\\ in Generalization of Neural Networks}
\date{}
\author{\fontsize{11}{1cm}\selectfont Behnam Neyshabur \thanks{Institute for Advanced Study, School of Mathematics, Princeton, NJ 08540. Email: \texttt{bneyshabur@ias.edu}}
\and \fontsize{11}{1cm}\selectfont  Zhiyuan Li \thanks{Department of Computer Science, Princeton University, Princeton, NJ 08540. Email: \texttt{zhiyuanli@princeton.edu}}
\and \fontsize{11}{1cm}\selectfont  Srinadh Bhojanapalli \thanks{Toyota Technological Institute at Chicago, Chicago IL 60637. Email: \texttt{srinadh@ttic.edu}}
\and \fontsize{11}{1cm}\selectfont  Yann LeCun\thanks{Facebook AI Research \& Courant Institute, New York University, NY 10012. Email: \texttt{yann@cs.nyu.edu}}
\and \fontsize{11}{1cm}\selectfont  Nathan Srebro \thanks{Toyota Technological Institute at Chicago, Chicago IL 60637. Email: \texttt{nati@ttic.edu}}
}
\begin{document}
	
	\maketitle
	
\begin{abstract} 
Despite existing work on ensuring generalization of neural networks in terms of scale sensitive complexity measures, such as norms, margin and sharpness, these complexity measures do not offer an explanation of why neural networks generalize better with over-parametrization. In this work we suggest a novel complexity measure based on unit-wise capacities resulting in a {\em tighter} generalization bound for two layer ReLU networks. Our capacity bound correlates with the behavior of test error with increasing network sizes, and could potentially explain the improvement in generalization with over-parametrization. We further present a matching lower bound for the Rademacher complexity that improves over previous capacity lower bounds for neural networks. 
\end{abstract}

\section{Introduction}\label{sec:intro}

Deep neural networks have enjoyed great success in learning across a wide variety of tasks. They played a crucial role in the seminal work of \citet{krizhevsky12}, starting an arms race of training larger networks with more hidden units, in pursuit of better test performance \citep{he2016deep}. In fact the networks used in practice are over-parametrized to the extent that they can easily fit random labels to the data \citep{zhang2017understanding}. Even though they have such a high capacity, when trained with real labels they achieve smaller generalization error. 

Traditional wisdom in learning suggests that using models with increasing capacity will result in overfitting to the training data. Hence capacity of the models is generally controlled either by limiting the size of the model (number of parameters) or by adding an explicit regularization, to prevent from overfitting to the training data. Surprisingly, in the case of neural networks we notice that increasing the model size only helps in improving the generalization error, even when the networks are trained without any explicit regularization - weight decay or early stopping \citep{lawrence1998size, srivastava2014dropout, neyshabur15b}. In particular, \citet{neyshabur15b} observed that training on models with increasing number of hidden units lead to decrease in the test error for image classification on MNIST and CIFAR-10. Similar empirical observations have been made over a wide range of architectural and hyper-parameter choices \citep{liang2017fisher, novak2018sensitivity, lee2018deep}.  What explains this improvement in generalization with over-parametrization?
 What is the right measure of complexity of neural networks that captures this generalization phenomenon? 

Complexity measures that depend on the total number of parameters of the network, such as VC bounds, do not capture this behavior as they increase with the size of the network.  \citet{NeyTomSre15, keskar2016large,  neyshabur2017exploring, bartlett2017spectrally, neyshabur2018pac, golowich2017size} and \citet{arora2018stronger} suggested different norm, margin and sharpness based measures, to measure the capacity of neural networks, in an attempt to explain the generalization behavior observed in practice. In particular \citet{bartlett2017spectrally} showed a margin based generalization bound that depends on the spectral norm and $\ell_{1, 2}$ norm of the layers of a network.  However, as shown in \citet{neyshabur2017exploring} and in Figure \ref{fig:bound-comparison}, these complexity measures fail to explain why over-parametrization helps, and in fact increase with the size of the network. \citet{dziugaite2017computing} numerically evaluated a generalization bound based on PAC-Bayes. Their reported numerical generalization bounds also increase with the increasing network size. These existing complexity measures increase with the size of the network as they depend on the number of hidden units either explicitly, or the norms in their measures implicitly depend on the number of hidden units for the networks used in practice \citep{neyshabur2017exploring} (see Figures~\ref{fig:term-comparison} and \ref{fig:bound-comparison}).


 \begin{figure}[t]
	\center
	\includegraphics[width=0.32\textwidth]{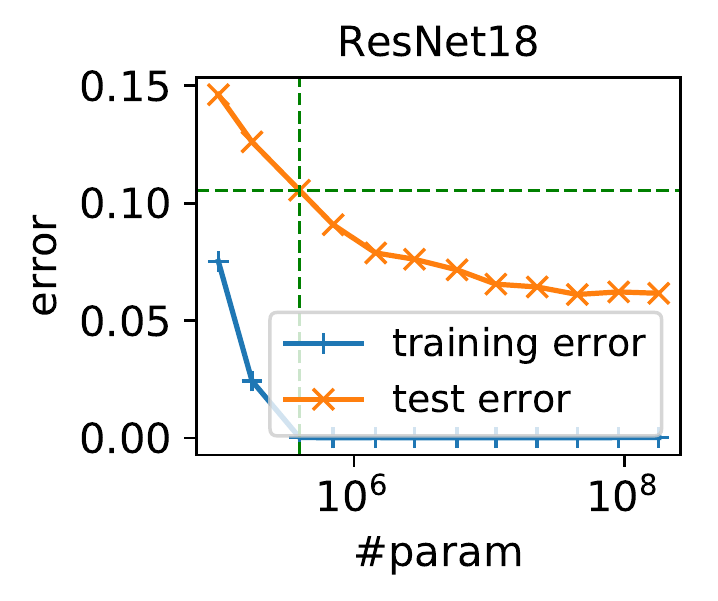}
	\includegraphics[width=0.31\textwidth]{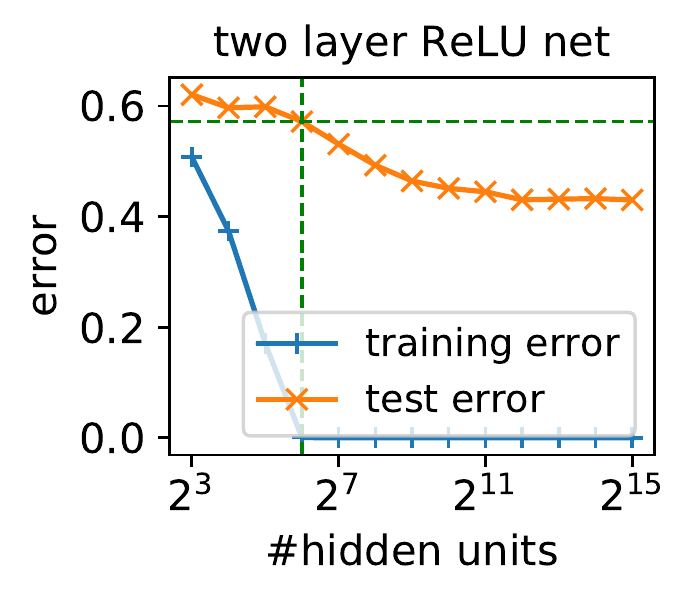}
	\includegraphics[width=0.34\textwidth]{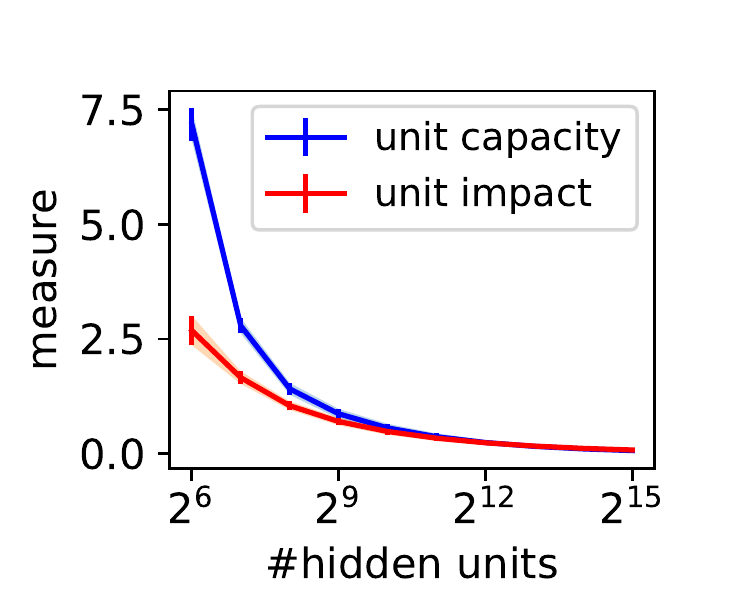}
	\caption{\small Over-parametrization phenomenon. {\bf Left panel}: Training pre-activation ResNet18 architecture of different sizes on CIFAR-10 dataset. We observe that even when after network is large enough to completely fit the training data(reference line), the test error continues to decrease for larger networks. {\bf Middle panel}: Training fully connected feedforward network with single hidden layer on CIFAR-10. We observe the same phenomena as the one observed in ResNet18 architecture. {\bf Right panel}:  Unit capacity captures the complexity of a hidden unit and unit impact captures the impact of a hidden unit on the output of the network, and are important factors in our capacity bound (Theorem~\ref{thm:rademacher}). We observe empirically that both unit capacity and unit impact shrink with a rate faster than $1/\sqrt{h}$ where $h$ is the number of hidden units. Please see Supplementary Section~\ref{sec:experiments} for experiments settings.}
	\vspace{-0.4cm}
	\label{fig:phenomenon}
\end{figure}

To study and analyze this phenomenon more carefully, we need to simplify the architecture making sure that the property of interest is preserved after the simplification. We therefore chose two layer ReLU networks since as shown in the left and middle panel of Figure~\ref{fig:phenomenon}, it exhibits the same behavior with over-parametrization as the more complex pre-activation ResNet18 architecture. In this paper we prove a {\em tighter} generalization bound (Theorem \ref{thm:generalization}) for two layer ReLU networks. Our capacity bound, unlike existing bounds, correlates with the test error and decreases with the increasing number of hidden units. Our key insight is to characterize complexity at a unit level, and as we see in the right panel in Figure \ref{fig:phenomenon} these unit level measures shrink at a rate faster than $1/\sqrt{h}$ for each hidden unit, decreasing the overall measure as the network size increases. When measured in terms of layer norms, our generalization bound depends on the Frobenius norm of the top layer and the Frobenius norm of the difference of the hidden layer weights with the initialization, which decreases with increasing network size (see Figure~\ref{fig:measures}). 

The closeness of learned weights to initialization in the over-parametrized setting can be understood by considering the limiting case as the number of hidden units go to infinity, as considered in \citet{bengio2006convex} and \citet{bach2014breaking}. In this extreme setting, just training the top layer of the network, which is a convex optimization problem for convex losses, will result in minimizing the training error, as the randomly initialized hidden layer has all possible features. Intuitively, the large number of hidden units  here represent all possible features and hence the optimization problem involves just picking the right features that will minimize the training loss. This suggests that as we over-parametrize the networks, the optimization algorithms need to do less work in tuning the weights of the hidden units to find the right solution.  \citet{dziugaite2017computing} indeed have numerically evaluated a PAC-Bayes measure from the initialization used by the algorithms and state that the Euclidean distance to the initialization is smaller than the Frobenius norm of the parameters.  \citet{nagarajan2017} also make a similar empirical observation on the significant role of initialization, and in fact prove an initialization dependent generalization bound for linear networks. However they do not prove a similar generalization bound for neural networks. Alternatively, \citet{liang2017fisher} suggested a Fisher-Rao metric based complexity measure that correlates with generalization behavior in larger networks but they also prove the capacity bound only for linear networks.

\textbf{Contributions:} Our contributions in this paper are as follows. \begin{itemize}
	\item We empirically investigate the role of over-parametrization in generalization of neural networks on 3 different datasets (MNIST, CIFAR10 and SVHN), and show that the existing complexity measures increase with the number of hidden units - hence do not explain the generalization behavior with over-parametrization.
	\item We prove {\em tighter} generalization bounds (Theorems \ref{thm:generalization} and \ref{thm:generalization_lp}) for two layer ReLU networks. Our proposed complexity measure actually decreases with the increasing number of hidden units, and can potentially explain the effect of over-parametrization on generalization of neural networks. 
	\item We provide a matching lower bound for the Rademacher complexity of two layer ReLU networks. Our lower bound considerably improves over the best known bound given in \citet{bartlett2017spectrally}, and to our knowledge is the first such lower bound that is bigger than the Lipschitz of the network class.
\end{itemize}
 
 \subsection{Preliminaries}
We consider two layer fully connected ReLU networks with input dimension $d$, output dimension $c$, and the number of hidden units $h$. Output of a network is $f_{\vecV,\vecU}(\vecx)=\vecV[\vecU \vecx]_+$\footnote{Since the number of bias parameters is negligible compare to the size of the network, we drop the bias parameters to simplify the analysis. Moreover, one can model the bias parameters in the first layer by adding an extra dimension with value 1.} where $\vecx \in \R^d$, $\vecU\in \R^{h\times d}$ and $\vecV\in \R^{c\times h}$. We denote the incoming weights to the hidden unit $i$ by $\vecu_i$ and the outgoing weights from hidden unit $i$ by $\vecv_i$. Therefore $\vecu_i$ corresponds to row $i$ of matrix $\vecU$ and $\vecv_i$ corresponds to the column $i$ of matrix $\vecV$.

We consider the $c$-class classification task where the label with maximum output score will be selected as the prediction. Following \citet{bartlett2017spectrally}, we define the margin operator $\mu:\R^c\times[c]\rightarrow \R$ as a function that given the scores $f(\vecx) \in \R^c$ for each label and the correct label $y\in[c]$, it returns the difference between the score of the correct label and the maximum score among other labels, i.e. $\mu(f(\vecx), y) = f(\vecx)[y] -\max_{i\neq y} f(\vecx)[i]$. We now define the ramp loss as follows:
\begin{equation}
\ell_\gamma(f(\vecx), y) =
\begin{cases}
0 & \mu(f(\vecx), y) > \gamma\\
\mu(f(\vecx), y)/\gamma& \mu(f(\vecx), y) \in [0,\gamma]\\
1 & \mu(f(\vecx), y) <0.
\end{cases}
\end{equation}
For any distribution $\calD$ and margin $\gamma>0$, we define the expected margin loss of a predictor $f(.)$ as $L_{\gamma}(f)=\mathP_{(\vecx,y)\sim \calD}\left[\ell_{\gamma}(f(\vecx),y)\right]$.
The loss $L_{\gamma}(.)$ defined this way is bounded between 0 and 1. We use $\hatL_{\gamma}(f)$ to denote the empirical estimate of the above expected margin loss. As setting $\gamma=0$ reduces the above to classification loss, we will use $L_{0}(f)$ and $\hatL_{0}(f)$ to refer to the expected risk and the training error respectively.

\section{Generalization of Two Layer ReLU Networks}\label{sec:generalization}
\begin{figure}
	\includegraphics[width=0.25\textwidth]{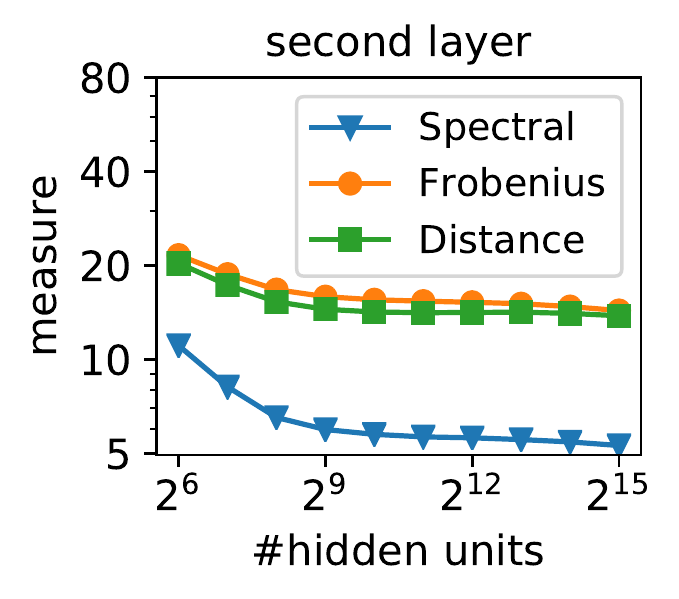}
	\includegraphics[width=0.23\textwidth]{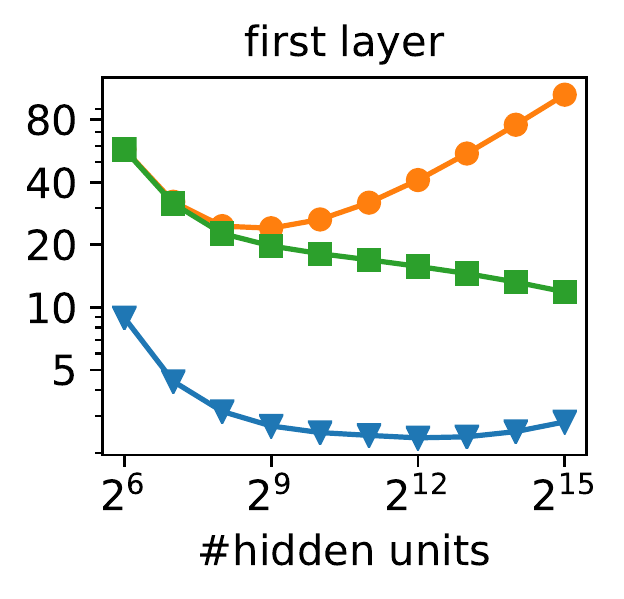}
	\includegraphics[width=0.24\textwidth]{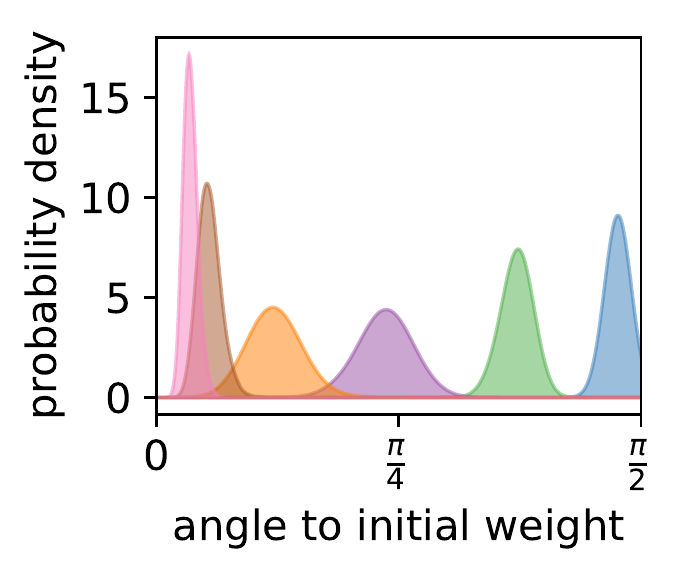}
	\includegraphics[width=0.26\textwidth]{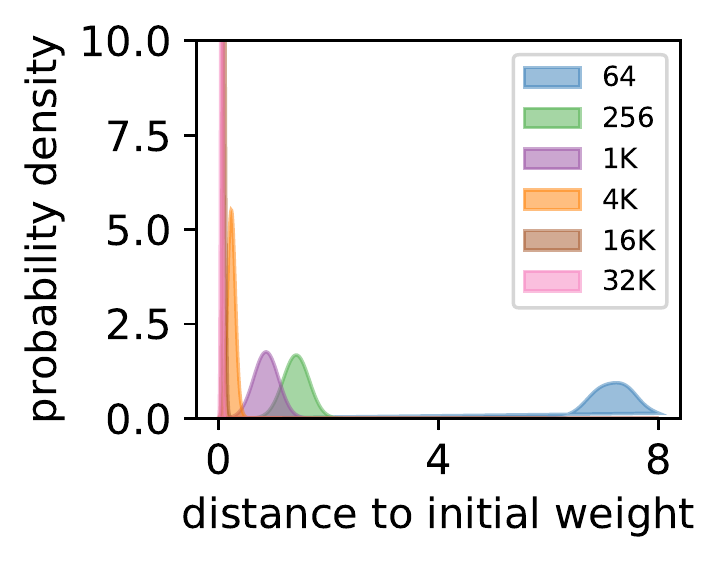}
	\caption{\small Properties of two layer ReLU networks trained on CIFAR-10. We report different measures on the trained network. From left to right: measures on the second (output) layer, measures on the first (hidden) layer, distribution of angles of the trained weights to the initial weights in the first layer, and the distribution of unit capacities of the first layer. "Distance" in the first two plots is the distance from initialization in Frobenius norm.}
	\label{fig:measures}
	\vspace{-0.3cm}
\end{figure}
Let $\ell_\gamma \circ \calH$ denotes the function class corresponding to the composition of the loss function and functions from class $\calH$.  With probability $1-\delta$ over the choice of the training set of size $m$, the following generalization bound holds for any function $f \in \calH$  \citep[Theorem 3.1]{mohri2012foundations}:
\begin{equation}\label{eq:generalization}
L_{0}(f) \leq \hatL_\gamma(f) + 2\calR_S(\ell_\gamma \circ \calH) + 3\sqrt{\frac{\ln(2/\delta)}{2m}}.
\end{equation}
where $\calR_S(\calH)$ is the Rademacher complexity of a class $\calH$ of functions with respect to the training set $\calS$ which is defined as:
\begin{equation}\label{eq:rademacher}
\calR_\calS(\calH) = \frac{1}{m}\E_{\xi \sim \left\{\pm 1\right\}^m} \left[\sup_{f\in \calH} \sum_{i=1}^m \xi_i f({x_i}) \right].
\end{equation}
Rademacher complexity is a capacity measure that captures the ability of functions in a function class to fit random labels which increases with the complexity of the class.

\subsection{An Empirical Investigation}\label{sec:investigation}
We will bound the Rademacher complexity of neural networks to get a bound on the generalization error . Since the Rademacher complexity depends on the function class considered, we need to choose the right function class that only captures the real trained networks, which is potentially much smaller than networks with all possible weights, to get a complexity measure that explains the decrease in generalization error with increasing width. Choosing a bigger function class can result in weaker bounds that does not capture this phenomenon. Towards that we first investigate the behavior of different measures of network layers with increasing number of hidden units. The experiments discussed below are done on the CIFAR-10 dataset. Please see Section~\ref{sec:experiments} for similar observations on SVHN and MNIST datasets.

\textbf{First layer:} As we see in the second panel in Figure \ref{fig:measures} even though the spectral and Frobenius norms of the learned layer decrease initially, they eventually increase with $h$, with Frobenius norm increasing at a faster rate. However the distance Frobenius norm measured w.r.t. initialization ($\norm{\vecU -\vecU_0}_F$) decreases. This suggests that the increase in the Frobenius norm of the weights in larger networks is due to the increase in the Frobenius norm of the random initialization. To understand this behavior in more detail we also plot the distance to initialization per unit and the distribution of angles between learned weights and initial weights in the last two panels of Figure~\ref{fig:measures}. We indeed observe that per unit distance to initialization decreases with increasing $h$, and a significant shift in the distribution of angles to initial points, from being almost orthogonal in small networks to almost aligned in large networks. This per unit distance to initialization is a key quantity that appears in our capacity bounds and we refer to it as unit capacity in the remainder of the paper. 

\noindent \textit{Unit capacity.}  We define $\beta_i = \norm{\vecu_i-\vecu^0_i}_2$ as the unit capacity of the hidden unit $i$.

\textbf{Second layer:}  Similar to first layer, we look at the behavior of different measures of the second layer of the trained networks with increasing $h$ in the first panel of Figure \ref{fig:measures}. Here, unlike the first layer, we notice that Frobenius norm and distance to initialization both decrease and are quite close suggesting a limited role of initialization for this layer. Moreover, as the size grows, since the Frobenius norm $\norm{\vecV}_F$ of the second layer slightly decreases, we can argue that the norm of outgoing weights $\vecv_i$ from a hidden unit $i$ decreases with a rate faster than $1/\sqrt{h}$. If we think of each hidden unit as a linear separator and the top layer as an ensemble over classifiers, this means the impact of each classifier on the final decision is shrinking with a rate faster than $1/\sqrt{h}$. This per unit measure again plays an important role and we define it as unit impact for the remainder of this paper.

\noindent \textit{Unit impact.}  We define $\alpha_i=\norm{\vecv_i}_2$ as the unit impact, which is the magnitude of the outgoing weights from the unit $i$.

Motivated by our empirical observations we consider the following class of two layer neural networks that depend on the capacity and impact of the hidden units of a network. Let $\calW$ be the following restricted set of parameters:
\begin{equation}\label{eq:param}
\calW=\left\{ (\vecV,\vecU) \mid \vecV\in \R^{c\times h},\vecU \in \R^{h\times d},\norm{\vecv_i}\leq \alpha_i,\norm{\vecu_i-\vecu^0_i}_2 \leq\beta_i\right\},
\end{equation}
We now consider the hypothesis class of neural networks represented using parameters in the set $\calW$:
\begin{equation}\label{eq:class}
\calF_\calW=\left\{f(\vecx)=\vecV\left[\vecU \vecx\right]_+ \mid (\vecV,\vecU) \in \calW\right\}.
\end{equation}
Our empirical observations indicate that networks we learn from real data have bounded unit capacity and unit impact and therefore studying the generalization behavior of the above function class can potentially provide us with a better understanding of these networks. Given the above function class, we will now study its generalization properties.

\subsection{Generalization Bound}\label{sec:bound}

In this section we prove a generalization bound for two layer ReLU networks. We first bound the Rademacher complexity of the class $\calF_\calW$ in terms of the sum over hidden units of the product of unit capacity and unit impact. Combining this with the equation \eqref{eq:generalization} will give a generalization bound.

\begin{theorem}\label{thm:rademacher}
Given a training set $\calS=\{\vecx_i\}_{i=1}^m$ and $\gamma>0$, Rademacher complexity of the composition of loss function $\ell_\gamma$ over the class $\calF_\calW$ defined in equations \eqref{eq:param} and \eqref{eq:class} is bounded as follows:
	\begin{small}\begin{align}\label{eq:upper_bound}
	\calR_\calS(\ell_\gamma \circ \calF_\calW) &\leq \frac{2\sqrt{2c}+2}{\gamma m}\sum_{j=1}^h \alpha_j \left(\beta_j\norm{\vecX}_F + \norm{\vecu_j^0\vecX}_2\right)\\
	&\leq \frac{2\sqrt{2c}+2}{\gamma \sqrt{m}} \norm{\bm{\alpha}}_2 \left(\norm{\bm{\beta}}_2 \sqrt{\frac{1}{m}\sum_{i=1}^m \norm{\vecx_i}_2^2}+ \sqrt{ \frac{1}{m} \sum_{i=1}^m \norm{\vecU^0\vecx_i}_2^2 }\right).
	\end{align}\end{small}
\end{theorem}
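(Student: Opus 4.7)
The plan is to bound $\calR_\calS(\ell_\gamma\circ\calF_\calW)$ by reducing it, step by step, to per-hidden-unit scalar Rademacher averages. First I would strip off the loss via Maurer's vector-contraction inequality: the ramp loss $\ell_\gamma(\cdot,y)$ is $1/\gamma$-Lipschitz in the margin, and the margin operator $\mu(\cdot,y):\R^c\to\R$ is $\sqrt{2}$-Lipschitz in Euclidean norm (its subgradient has two non-zero entries $\pm 1$), so each map $\vecz\mapsto \ell_\gamma(\vecz,y_i)$ from $\R^c$ to $\R$ is $\sqrt{2}/\gamma$-Lipschitz. Maurer's lemma then yields
\begin{equation*}
\calR_\calS(\ell_\gamma\circ\calF_\calW)\le \frac{2}{\gamma m}\,\E_\xi \sup_{(\vecV,\vecU)\in\calW}\sum_{i=1}^m\sum_{k=1}^c \xi_{i,k}\,f_k(\vecx_i),
\end{equation*}
for iid Rademacher signs $\xi_{i,k}$.

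Writing $f_k(\vecx_i)=\sum_j v_{j,k}[\vecu_j^\top\vecx_i]_+$ and observing that the constraints defining $\calW$ are separable across hidden units, the supremum factorizes into a sum over $j$. For each $j$, the inner maximization over $\vecv_j$ is linear on an $\ell_2$ ball of radius $\alpha_j$, so Cauchy--Schwarz yields $\alpha_j\,\|\veca_j(\vecu_j)\|_2$, where $\veca_j(\vecu)\in\R^c$ has $k$-th entry $\sum_i \xi_{i,k}[\vecu^\top\vecx_i]_+$. Bounding the Rademacher complexity thus reduces to controlling, for each unit, the quantity $E_j:=\E_\xi \sup_{\|\vecu-\vecu_j^0\|\le\beta_j}\|\veca_j(\vecu)\|_2$.

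For $E_j$ I would split via the triangle inequality $\|\veca_j(\vecu)\|_2 \le \|\veca_j(\vecu_j^0)\|_2 + \|\veca_j(\vecu)-\veca_j(\vecu_j^0)\|_2$. The initialization piece is deterministic in $\vecu$, and by independence of the Rademacher signs together with Jensen's inequality,
\begin{equation*}
\E\|\veca_j(\vecu_j^0)\|_2 \le \sqrt{\E\|\veca_j(\vecu_j^0)\|_2^2} = \sqrt{c\sum_i[\vecu_j^{0\top}\vecx_i]_+^2}\le \sqrt{c}\,\|\vecu_j^0\vecX\|_2.
\end{equation*}
For the deviation piece, I define $\Delta_i(\vecu):=[\vecu^\top\vecx_i]_+-[\vecu_j^{0\top}\vecx_i]_+$, which is a 1-Lipschitz function of $(\vecu-\vecu_j^0)^\top\vecx_i$ vanishing at $\vecu=\vecu_j^0$. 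After passing to the scalar bound $\|\cdot\|_2\le\|\cdot\|_1$ coordinatewise in $k$, the Ledoux--Talagrand scalar contraction absorbs the (shifted) ReLU, and the standard linear-class inequality $\E\sup_{\|\vecd\|\le\beta_j}\sum_i \xi_i\,\vecd^\top\vecx_i\le \beta_j\|\vecX\|_F$ yields a per-unit contribution of order $\sqrt{c}\,\beta_j\|\vecX\|_F$. Summing $\alpha_j E_j$ over $j$ and feeding back into the Maurer bound produces the first claimed inequality, with the prefactor $2\sqrt{2c}+2$.

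The main obstacle is the deviation step: the $\ell_2$ norm of $\veca_j(\vecu)-\veca_j(\vecu_j^0)$ taken uniformly in $\vecu$ does not factor cleanly through a scalar contraction, since the supremum over $\vecu$ sits inside both the norm and the expectation; one must reconcile the $\sqrt{c}$ factor arising from reducing vector to scalar norms with the matching $\sqrt{c}$ from the initialization step so that both pieces can share a single prefactor, which requires careful symmetrization and bookkeeping (alternatively, a second use of Maurer's vector contraction treating the ReLU as a 1-Lipschitz scalar map). The second inequality then follows by two applications of Cauchy--Schwarz over the $h$ hidden units: $\sum_j\alpha_j\beta_j\le \|\vecAlpha\|_2\|\vecBeta\|_2$, and $\sum_j\alpha_j\|\vecu_j^0\vecX\|_2\le \|\vecAlpha\|_2\bigl(\sum_j\|\vecu_j^0\vecX\|_2^2\bigr)^{1/2}$, where the last sum equals $\sum_i\|\vecU^0\vecx_i\|_2^2$ and $\|\vecX\|_F^2=\sum_i\|\vecx_i\|^2$, so factoring out $\sqrt{m}$ produces the $1/\sqrt{m}$ rate in the second bound.
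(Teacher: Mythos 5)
Your high-level architecture (strip the loss with Maurer's vector contraction, exploit separability of $\calW$ across hidden units, split each unit's contribution into an initialization piece and a deviation piece, finish with Cauchy--Schwarz over $j$) is reasonable, and the initialization piece and the final step from the first to the second displayed inequality are handled correctly. But the step you yourself flag as ``the main obstacle'' is a genuine gap, and it is precisely the point where the paper had to invent something. After you maximize over $\vecv_j$ via Cauchy--Schwarz, you must control
$\E_{\vecxi}\sup_{\norm{\vecu-\vecu_j^0}\leq\beta_j}\bigl\lVert\sum_{i=1}^m\vecxi_i\bigl([\inner{\vecu}{\vecx_i}]_+-[\inner{\vecu_j^0}{\vecx_i}]_+\bigr)\bigr\rVert_2$
with a $\sqrt{c}$ dependence. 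Neither of your proposed fixes delivers this: reducing $\norm{\cdot}_2\leq\norm{\cdot}_1$ and applying scalar Ledoux--Talagrand contraction coordinatewise sums $c$ terms each of size $\beta_j\norm{\vecX}_F$, giving a factor $c$ rather than the $\sqrt{2c}$ in the stated constant; and Maurer's vector-contraction inequality bounds $\E\sup\sum_i\xi_i\,g_i(f(\vecx_i))$ for Lipschitz scalar-valued $g_i$, which is the wrong shape for a supremum sitting inside a vector norm coupled to the Rademacher vectors. No routine ``symmetrization and bookkeeping'' closes this; the difficulty is that once $\vecv_j$ has been maximized out, the ReLU increment is multiplied by the random vectors $\vecxi_i$ rather than by a deterministic quantity.

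The paper avoids this trap by a different order of operations (Lemma~\ref{lem:decomposition}): it keeps both $\vecv_j$ and $\vecu_j$ inside the supremum and runs an induction over the $m$ data points, symmetrizing over a pair $(\vecV,\vecU),(\vecV',\vecU')$ at each point $\vecx_t$. The triangle inequality then places the \emph{deterministic} bound $\alpha_j$ in front of the increment $\abs{[\inner{\vecu_j}{\vecx_t}]_+-[\inner{\vecu'_j}{\vecx_t}]_+}\leq\abs{\inner{\vecu_j-\vecu'_j}{\vecx_t}}$, and places the scalar weight $\rho_{tj}+\beta_j\norm{\vecx_t}_2$ in front of $\norm{\vecv_j-\vecv'_j}_2$, which is re-linearized into $\inner{\vecxi_t}{\vecv_j}$ using Lemma~\ref{lem:contraction}. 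The result is two decoupled \emph{linear} Rademacher problems (one in $\vecv_j$ with scalar inputs, one in $\vecu_j$), each handled by Lemma~\ref{lem:linear}, which is how the $\sqrt{2c}$ arises. To complete your argument you would need either to reproduce this pointwise re-randomization or to supply a vector-valued analogue of the contraction principle for the deviation term; as written, the most concrete version of your sketch proves a bound with $c$ in place of $\sqrt{2c}$, which is not the stated theorem.
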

The proof is given in the supplementary Section~\ref{sec:proofs}. The main idea behind the proof is a new technique to decompose the complexity of the network into complexity of the hidden units. To our knowledge, all previous works decompose the complexity to that of layers and use Lipschitz property of the network to bound the generalization error. However, Lipschitzness of the layer is a rather weak property that ignores the linear structure of each individual layer. Instead, by decomposing the complexity across the hidden units, we get the above tighter bound on the Rademacher complexity of the networks.



The generalization bound in Theorem~\ref{thm:rademacher} is for any function in the function class defined by a specific choice of $\bm{\alpha}$ and $\bm{\beta}$ fixed before the training procedure. To get a generalization bound that holds for all networks, we need to cover the space of possible values for $\bm{\alpha}$ and $\bm{\beta}$ and take a union bound over it. The following theorem states the generalization bound for any two layer ReLU network \footnote{For the statement with exact constants see Lemma~\ref{lem:generalization} in Supplementary Section~\ref{sec:proofs}. }.

\begin{theorem}\label{thm:generalization}
For any $h\geq 2$, $\gamma>0$, $\delta\in(0,1)$ and $\vecU^0\in \R^{h\times d}$, with probability $1-\delta$ over the choice of the training set $\calS=\{\vecx_i\}_{i=1}^m\subset \R^d$, for any function $f(\vecx)=\vecV[\vecU \vecx]_{+}$ such that $\vecV\in \R^{c\times h}$ and $\vecU\in\R^{h\times d}$, the generalization error is bounded as follows:
	\begin{small} \begin{align*}
L_{0}(f) & \leq \hatL_\gamma(f) + \tildeO\left(\frac{\sqrt{c}\norm{\vecV}_F\left(\norm{\vecU-\vecU^0}_{F}\norm{\vecX}_F+\norm{\vecU^0\vecX}_F\right)}{\gamma m }	+\sqrt{\frac{h}{m}}\right)\\
&\leq \hatL_\gamma(f) + \tildeO\left(\frac{\sqrt{c}\norm{\vecV}_F\left(\norm{\vecU-\vecU^0}_{F}+\norm{\vecU^0}_2\right)  \sqrt{\frac{1}{m}\sum_{i=1}^m \norm{\vecx_i}_2^2} }{\gamma \sqrt{m} }	+\sqrt{\frac{h}{m}}\right).
\end{align*}\end{small}
\end{theorem}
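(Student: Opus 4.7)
The plan is to combine Theorem~\ref{thm:rademacher} (Rademacher bound for the restricted class $\calF_\calW$) with the standard Rademacher-based generalization inequality~\eqref{eq:generalization}, and then lift the resulting bound from a single, pre-fixed choice of per-unit caps $(\bm{\alpha},\bm{\beta})$ to arbitrary two-layer networks by covering the space of caps and applying a union bound. The second inequality in the theorem will follow from the first by $\norm{\vecU^0\vecx_i}_2\leq\norm{\vecU^0}_2\norm{\vecx_i}_2$ and $\norm{\vecX}_F=\sqrt{m}\sqrt{\frac1m\sum_i\norm{\vecx_i}^2}$.

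First I would fix $\bar{\bm\alpha},\bar{\bm\beta}\in\R^h_{\geq 0}$ before looking at the training data. Theorem~\ref{thm:rademacher} (second form) combined with~\eqref{eq:generalization} yields, with probability at least $1-\delta'$ over $\calS$, that every $f\in\calF_\calW$ satisfies
\[
L_0(f)\leq\hatL_\gamma(f)+\frac{C\sqrt{c}}{\gamma\sqrt m}\norm{\bar{\bm\alpha}}_2\!\left(\norm{\bar{\bm\beta}}_2\sqrt{\frac1m\sum_{i=1}^m\norm{\vecx_i}^2}+\sqrt{\frac1m\sum_{i=1}^m\norm{\vecU^0\vecx_i}^2}\right)+3\sqrt{\frac{\ln(2/\delta')}{2m}}.
\]
Next I would discretize the cap vectors. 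For each coordinate $j\in[h]$ independently let $\bar\alpha_j,\bar\beta_j$ range over the geometric grid $G=\{2^{-k_0},2^{-k_0+1},\ldots,2^{k_0}\}$ with $k_0=O(\log m)$, yielding $|G|^{2h}$ configurations. I would apply the displayed bound to each configuration with failure probability $\delta/|G|^{2h}$ and union-bound; the resulting $\sqrt{\ln(1/\delta')/m}$ contributes $\sqrt{O(h\log k_0)/m}=\tilde O(\sqrt{h/m})$, accounting for the second term in the theorem.

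Finally I would round any given network onto the grid. For a specific $f=f_{\vecV,\vecU}$, set $\alpha^\ast_j=\norm{\vecv_j}$ and $\beta^\ast_j=\norm{\vecu_j-\vecu^0_j}$, and round each up to the nearest element of $G$, producing $\bar\alpha_j\in[\alpha^\ast_j,2\alpha^\ast_j]$ and $\bar\beta_j\in[\beta^\ast_j,2\beta^\ast_j]$. Then $f\in\calF_\calW$ for this rounded pair, while $\norm{\bar{\bm\alpha}}_2\leq2\norm{\bm\alpha^\ast}_2=2\norm{\vecV}_F$ and $\norm{\bar{\bm\beta}}_2\leq2\norm{\vecU-\vecU^0}_F$. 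Plugging into the union-bounded display gives the first inequality of the theorem, with the constant $2$ absorbed into the $\tilde O$.

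The main obstacle is the accounting in the cover-and-union-bound step: the per-unit grid has size $|G|^{2h}$, exponential in $h$, and one needs this to produce only an additive $\tilde O(\sqrt{h/m})$ rather than a multiplicative $\mathrm{poly}(h)$ inflation of the first term. The saving relies on two structural facts: Theorem~\ref{thm:rademacher} depends on $(\bar{\bm\alpha},\bar{\bm\beta})$ only through the scalars $\norm{\bar{\bm\alpha}}_2,\norm{\bar{\bm\beta}}_2$, and $\ln|G|^{2h}=O(h\log\log m)$, so the log-of-covering-number enters additively under a square root. Care is also needed for networks whose per-unit norms fall outside $[2^{-k_0},2^{k_0}]$: if any $\alpha^\ast_j$ or $\beta^\ast_j$ exceeds $2^{k_0}$ the right-hand side already exceeds $1$ and the bound is trivial, and if some value is below $2^{-k_0}$ the contribution is below the $\tilde O$-slack and can be truncated without loss.
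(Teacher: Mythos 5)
Your proposal is correct and follows the same overall skeleton as the paper --- apply Theorem~\ref{thm:rademacher} together with \eqref{eq:generalization} for a fixed pair $(\bm{\alpha},\bm{\beta})$, cover the space of cap vectors, union-bound, and round a given network up onto the cover using that the Rademacher bound depends on the caps only through $\norm{\bm{\alpha}}_2$ and $\norm{\bm{\beta}}_2$ --- but your covering construction is genuinely different from the paper's. The paper treats the cover in two stages: a one-dimensional grid of size $O(\gamma\sqrt{m})$ over the global norms $C_1=\norm{\vecV^\top}_{p,2}$, $C_2=\norm{\vecU-\vecU^0}_{p,2}$ (Lemma~\ref{lem:generalization_mu}), combined with Lemma~\ref{lem:lp-cover}, which covers the $\ell_p$ ball of radius $C$ in $\R^h$ by $N_{p,h}=\binom{\lceil h/\mu\rceil+h-2}{h-1}$ entrywise-dominating boxes whose corners have $\ell_2$ norm at most a constant times $h^{1/2-1/p}C$; for $p=2$ this gives $\ln N_{p,h}\leq 5h$ and hence the additive $\sqrt{h/m}$. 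You instead use a single-level per-coordinate geometric (doubling) grid of size $|G|^{2h}$ with $\ln|G|=O(\log\log m)$, relying on the fact that rounding each coordinate up by at most a factor of $2$ inflates the $\ell_2$ norm by at most $2$; this is more elementary and entirely adequate for Theorem~\ref{thm:generalization}. What the paper's combinatorial cover buys is the generalization to arbitrary $p\geq 2$ in Theorem~\ref{thm:generalization_lp}, where the count $\binom{\lceil h/\mu\rceil+h-2}{h-1}$ with $\mu=e^p-1$ yields the improved additive term $\sqrt{e^{-p}h/m}$ --- a doubling grid cannot deliver that $p$-dependence. One minor bookkeeping caveat: your dismissal of out-of-range coordinates via ``the right-hand side already exceeds $1$'' implicitly assumes the factor multiplying $\norm{\vecV}_F$ does not vanish; the clean fix is the paper's device of thresholding the relevant \emph{products} and carrying additive $+1$'s inside the parentheses (as in Lemma~\ref{lem:generalization}), which is cosmetic and does not affect the validity of your argument.
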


The  extra additive factor $\tildeO(\sqrt{\frac{h}{m}})$ is the result of taking the union bound over the cover of $\bm{\alpha}$ and $\bm{\beta}$. As we see in Figure~\ref{fig:bound-comparison}, in the regimes of interest this new additive term is small and does not dominate the first term. While we show that the dependence on the first term cannot be avoided using an explicit lower bound (Theorem \ref{thm:lower}), the additive term with dependence on $\sqrt{h}$ might be just an artifact of our proof. In Section~\ref{sec:largenets} we present a tighter bound based on $\ell_{\infty,2}$ norm of the weights which removes the extra additive term for large $h$ at a price of weaker first term.

\subsection{Comparison with Existing Results}\label{sec:comparison}


\begin{table}\label{tbl:comparison}
	\small
	\setlength\tabcolsep{4pt}
	\begin{center}
		\begin{tabular}[tb]{|c|l|c|c|}
			\hline
			\# & Reference & Measure \\
			\hline
			(1) & \citet{harvey2017nearly} & ${\tilde{\Theta}(dh)}$ \\
			\hline
			(2) & \citet{bartlett2002rademacher} & ${\tilde{\Theta} \left( \norm{\vecU}_{\infty, 1} \norm{\vecV}_{\infty, 1} \right)}$ \\
			\hline
			(3) & \citet{NeyTomSre15}, \citet{golowich2017size}&  ${\tilde{\Theta} \left( \norm{\vecU}_{F} \norm{\vecV}_F \right)}$  \\
			\hline
			(4) & \citet{bartlett2017spectrally}, \citet{golowich2017size} & ${\tilde{\Theta} \left( \norm{\vecU}_2  \norm{\vecV -\vecV_0}_{1,2} +  \norm{\vecU -\vecU_0}_{1,2} \norm{\vecV}_2 \right)}$ \\
			\hline
			(5) & \citet{neyshabur2018pac} & ${\tilde{\Theta} \left( \norm{\vecU}_2 \norm{\vecV -\vecV_0}_F + \sqrt{h}\norm{\vecU-\vecU_0}_F \norm{\vecV}_2  \right)}$ \\
			\hline
			(6) & Theorem \ref{thm:generalization} & ${ \tilde{\Theta} \left( \norm{\vecU_0}_2 \norm{\vecV}_F +\norm{\vecU-\vecU^0}_{F} \norm{\vecV}_F  + \sqrt{h} \right)}$ \\
			\hline
	\end{tabular}\end{center}
	\caption{\small Comparison with the existing generalization measures presented for the case of two layer ReLU networks with constant number of outputs and constant margin.}
	\label{table:comparison}
	\vspace{-0.3cm}
\end{table}
\begin{figure}[t]
	\includegraphics[width=0.245\textwidth]{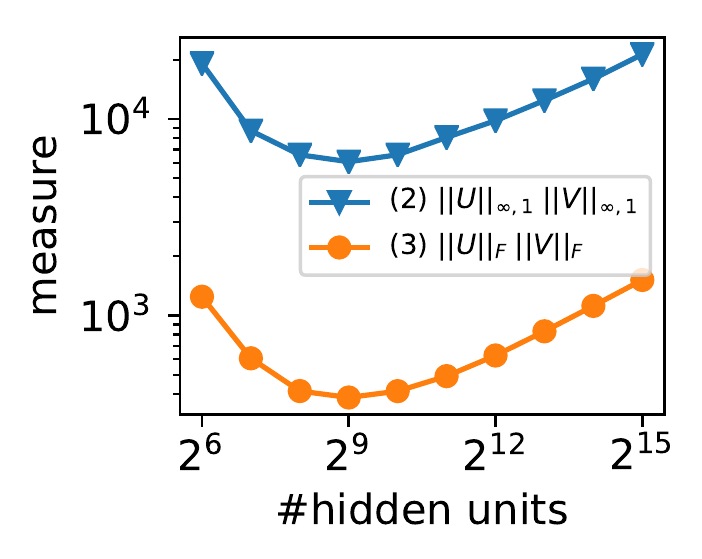}
	\includegraphics[width=0.245\textwidth]{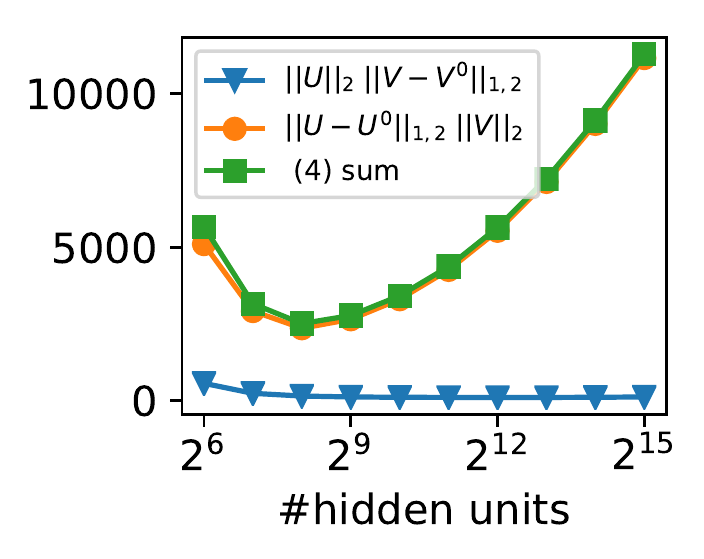}
	\includegraphics[width=0.245 \textwidth]{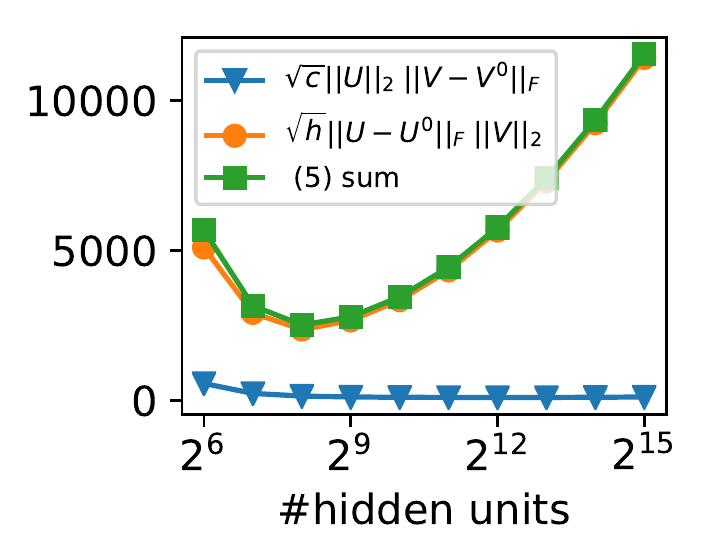}
	\includegraphics[width=0.24\textwidth]{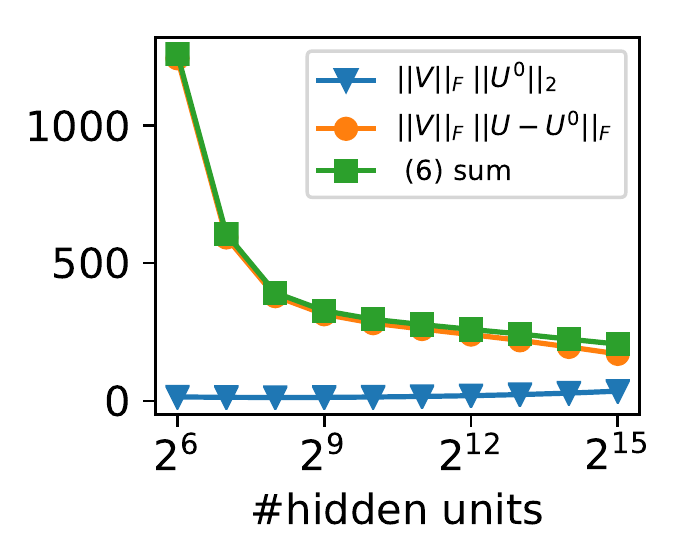}
	\caption{\small Behavior of terms presented in Table \ref{table:comparison} with respect to the size of the network trained on CIFAR-10.}
	\label{fig:term-comparison}
	\vspace{-0.3cm}
\end{figure}

In table \ref{table:comparison} we compare our result with the existing generalization bounds, presented for the simpler setting of two layer networks.  In comparison with the bound $\tilde{\Theta} \left( \norm{\vecU}_2  \norm{\vecV -\vecV_0}_{1,2} +  \norm{\vecU -\vecU_0}_{1,2} \norm{\vecV}_2 \right)$ \citep{bartlett2017spectrally, golowich2017size}: The first term in their bound $\norm{\vecU}_2  \norm{\vecV -\vecV_0}_{1,2}$ is of smaller magnitude and behaves roughly similar to the first term in our bound  $\norm{\vecU_0}_2 \norm{\vecV}_F$ (see Figure \ref{fig:term-comparison} last two panels). The key complexity term in their bound is $\norm{\vecU -\vecU_0}_{1,2} \norm{\vecV}_2$, and in our bound is $\norm{\vecU-\vecU^0}_{F} \norm{\vecV}_F$, for the range of $h$ considered. $\norm{\vecV}_2$ and $\norm{\vecV}_F$ differ by number of classes, a small constant, and hence behave similarly. However, $\norm{\vecU -\vecU_0}_{1,2}$ can be as big as $\sqrt{h} \cdot \norm{\vecU-\vecU^0}_{F}$ when most hidden units have similar capacity, and are only similar for really sparse networks. Infact their bound increases with $h$ mainly because of this term $\norm{\vecU -\vecU_0}_{1,2}$  . As we see in the first and second panels of Figure \ref{fig:term-comparison},  $\ell_1$ norm terms appearing in \citet{bartlett2002rademacher, bartlett2017spectrally, golowich2017size} over hidden units increase with the number of units as the hidden layers learned in practice are usually dense.
\begin{figure}[t]
	\includegraphics[width=0.24\textwidth]{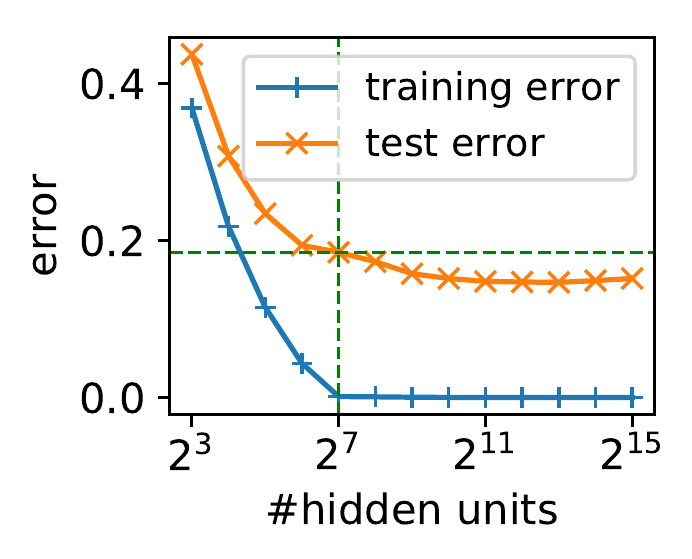}
	\includegraphics[width=0.23\textwidth]{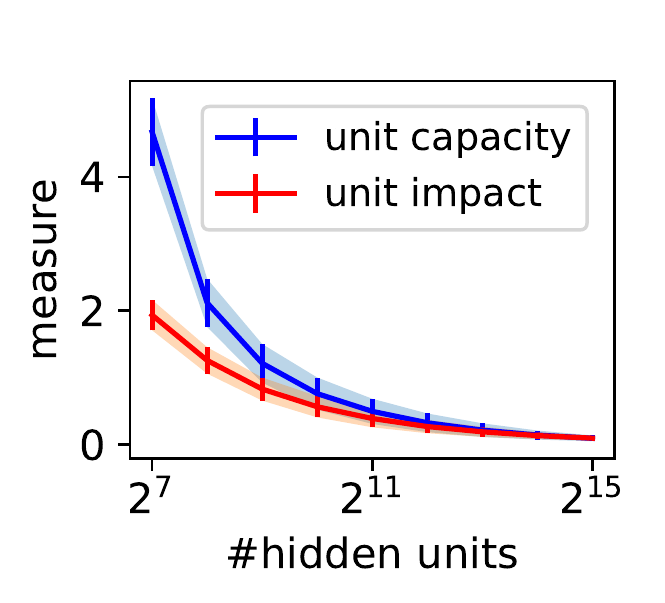}
	\includegraphics[width=0.25\textwidth]{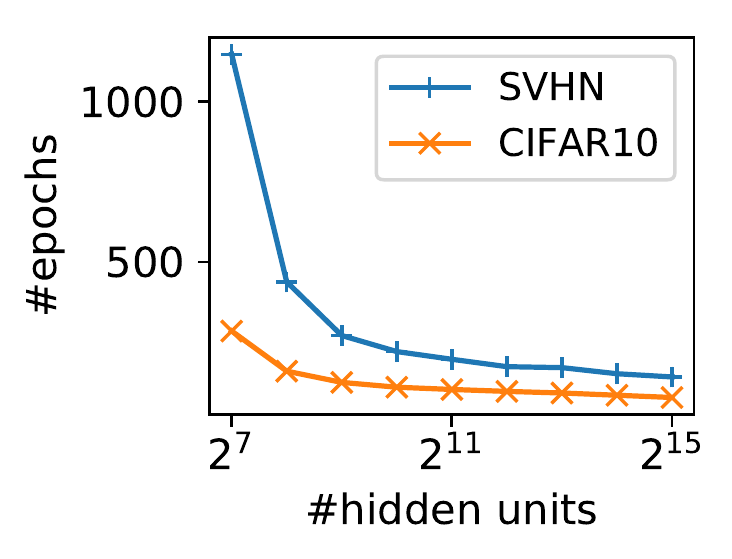}
	\includegraphics[width=0.25\textwidth]{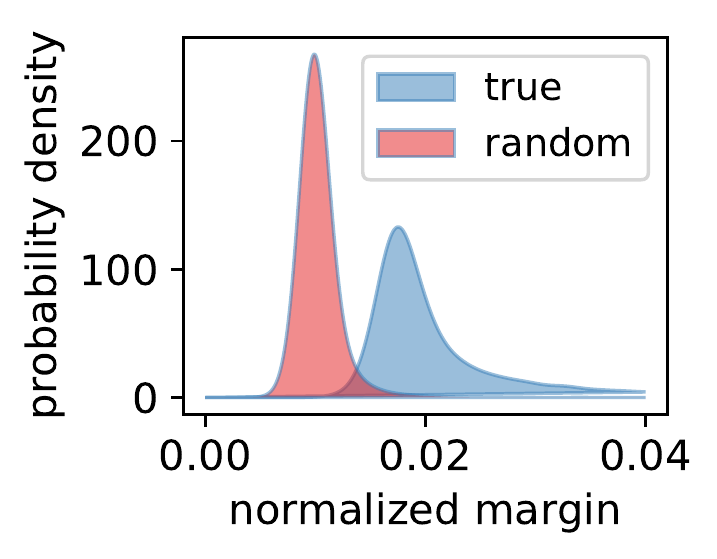}
	\caption{\small First panel: Training and test errors of fully connected networks trained on SVHN. Second panel: unit-wise properties measured on a two layer network trained on SVHN dataset.  Third panel: number of epochs required to get 0.01 cross-entropy loss. Fourth panel: comparing the distribution of margin of data points normalized on networks trained on true labels vs a network trained on random labels.}
	\label{fig:svhn}
	\vspace{-0.3cm}
\end{figure}
\begin{figure}[t]
	\includegraphics[width=0.28\textwidth]{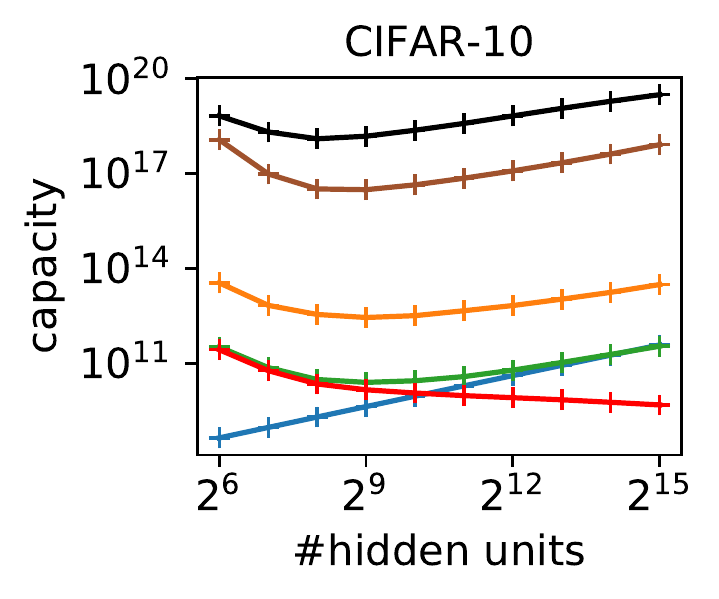}
	\includegraphics[width=0.28\textwidth]{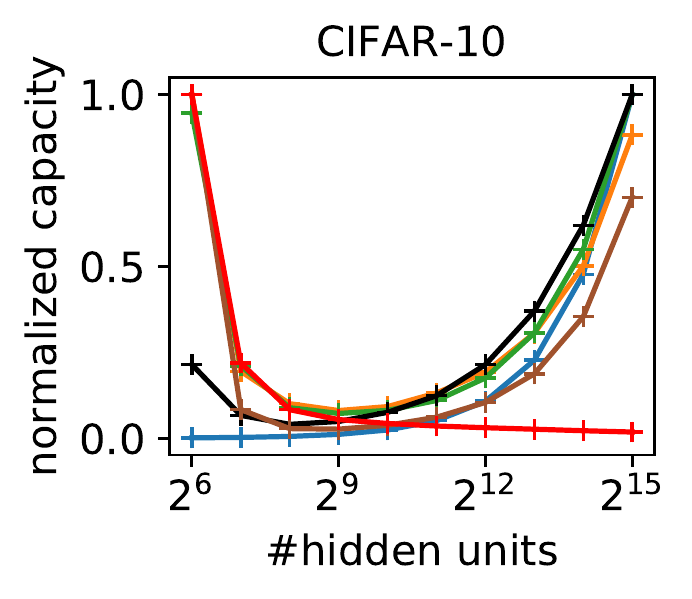}
	\includegraphics[width=0.44\textwidth]{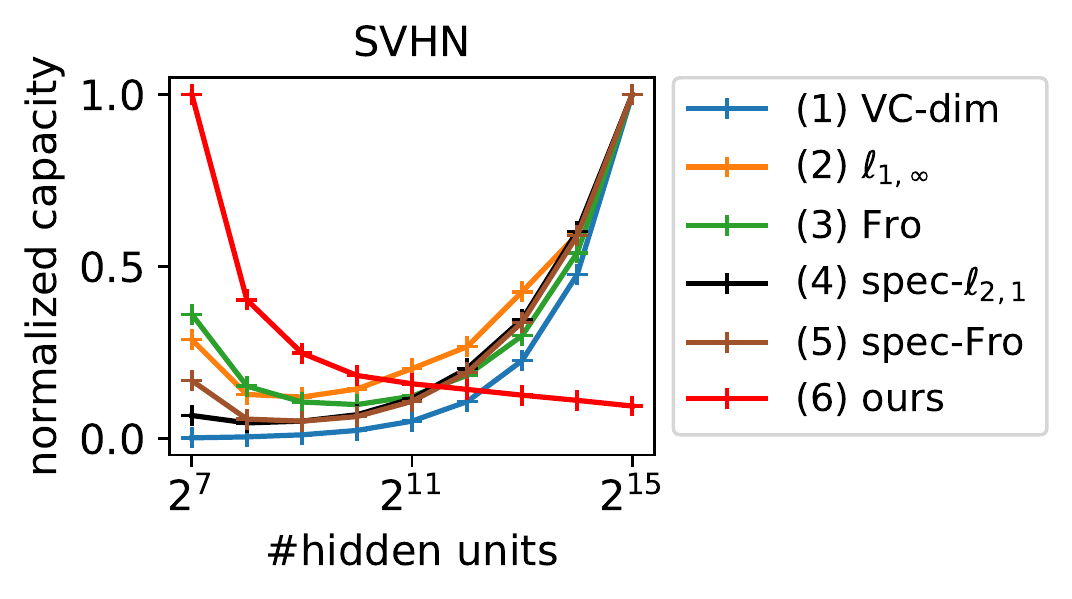}
	\caption{\small Left panel: Comparing capacity bounds on CIFAR10 (unnormalized). Middle panel: Comparing capacity bounds on CIFAR10 (normalized). Right panel: Comparing capacity bounds on SVHN (normalized). }
	\label{fig:bound-comparison} \vspace{-0.4cm}
\end{figure}
\citet{NeyTomSre15, golowich2017size} showed a bound depending on the product of Frobenius norms of layers, which increases with $h$, showing the important role of initialization in our bounds. In fact the proof technique of \citet{NeyTomSre15} does not allow for getting a  bound with norms measured from initialization, and our new decomposition approach is the key for the tighter bound. 


\textbf{Experimental comparison.} We train two layer ReLU networks of size $h$ on CIFAR-10 and SVHN datasets with values of $h$ ranging from $2^6$ to $2^{15}$. The training and test error for CIFAR-10 are shown in the first panel of Figure~\ref{fig:phenomenon}, and for SVHN in the left panel of Figure~\ref{fig:svhn}. We observe for both datasets that even though a network of size 128 is enough to get to zero training error, networks with sizes well beyond 128 can still get better generalization, even when trained without any regularization. We further measure the unit-wise properties introduce in the paper, namely unit capacity and unit impact. These quantities decrease with increasing $h$, and are reported in the right panel of Figure ~\ref{fig:phenomenon} and second panel of Figure~\ref{fig:svhn}. Also notice that the number of epochs required for each network size to get 0.01 cross-entropy loss decreases for larger networks as shown in the third panel of Figure~\ref{fig:svhn}. 

For the same experimental setup, Figure~\ref{fig:bound-comparison} compares the behavior of different capacity bounds over networks of increasing sizes. Generalization bounds typically scale as $\sqrt{C/m}$ where $C$ is the effective capacity of the function class. The left panel reports the effective capacity $C$ based on different measures calculated with all the terms and constants. We can see that our bound is the only that decreases with $h$ and is consistently lower that other norm-based data-independent bounds. Our bound even improves over VC-dimension for networks with size larger than 1024. While the actual numerical values are very loose, we believe they are useful tools to understand the relative generalization behavior with respect to different complexity measures, and in many cases applying a set of data-dependent techniques, one can improve the numerical values of these bounds significantly~\citep{dziugaite2017computing, arora2018stronger}. In the middle and right panel we presented each capacity bound normalized by its maximum in the range of the study for networks trained on CIFAR-10 and SVHN respectively. For both datasets, our capacity bound is the only one that decreases with the size even for networks with about $100$ million parameters. All other existing norm-based bounds initially decrease for smaller networks but then increase significantly for larger networks. Our capacity bound therefore could potentially point to the right properties that allow the over-parametrized networks to generalize.


Finally we check the behavior of our complexity measure under a different setting where we compare this measure between networks trained on real and random labels \citep{neyshabur2017exploring, bartlett2017spectrally}. We plot the distribution of margin normalized by our measure, computed on networks trained with true and random labels in the last panel of Figure~\ref{fig:svhn} - and as expected they correlate well with the generalization behavior.

\section{Lower Bound}\label{sec:lower}
In this section we will prove a Rademacher complexity lower bound for neural networks matching the dominant term in the upper bound of Theorem \ref{thm:rademacher}. We will show our lower bound on a smaller function class than $\calF_\calW$, with an additional constraint on spectral norm of the hidden layer, as it allows for comparison with the existing results, and extends also to the bigger class $\calF_\calW$. 

\begin{theorem}\label{thm:lower}


Define the parameter set \[
\calW'=\left\{ (\vecV,\vecU) \mid \vecV\in \R^{1\times h},\vecU \in \R^{h\times d},\norm{\vecv_j}\leq \alpha_j,\norm{\vecu_j-\vecu^0_j}_2 \leq\beta_j,\|\vecU-\vecU^0\|_2\leq \max_{j\in h} \beta_j\right\},
\]
and let $\calF_{\calW'}$ be the function class defined on $\calW'$ by equation~\eqref{eq:class}. Then, for any $d=h\leq m$, $\{\alpha_j,\beta_j\}_{j=1}^h \subset \R^+$ and $\vecU_0 = \bm{0}$,  there exists $\calS=\{\vecx_i\}_{i=1}^m \subset \R^d$, such that 
\[\calR_\calS(\calF_{\calW}) \geq \calR_\calS(\calF_{\calW'})  =\Omega \left( \frac{\sum_{j=1}^h \alpha_j\beta_j\|\vecX\|_F}{m} \right).\]
\end{theorem}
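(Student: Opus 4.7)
The plan is to exhibit an explicit dataset $\calS$ together with a tractable sub-family of $(\vecV,\vecU)\in\calW'$, reduce the supremum over that sub-family to $h$ decoupled Rademacher sums, and control each of them by Khintchine's inequality.

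First I would pick the data. Assume $h\mid m$ (the general case follows the same argument), partition $\{1,\ldots,m\}$ into blocks $G_1,\ldots,G_h$ of equal size $m/h$, and set $\vecx_i=\rho\,\vece_{j(i)}$ for $i\in G_{j(i)}$, where $\{\vece_j\}$ is the standard basis of $\R^d=\R^h$ and $\rho>0$ is a free scaling parameter; then $\|\vecX\|_F=\rho\sqrt{m}$. Next I would restrict the supremum in $\calR_\calS(\calF_{\calW'})$ to the sub-family $\calW''\subseteq\calW'$ defined by $\vecu_j=\beta_j\vece_j$ for each $j$ and $|v_j|\leq\alpha_j$. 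This $\vecU$ is diagonal with orthogonal rows of norms $\beta_j$, so $\|\vecu_j\|=\beta_j$ and $\|\vecU\|_2=\max_j\beta_j$, and all the constraints defining $\calW'$ are satisfied (the spectral constraint is attained with equality, which is precisely the reason for the choice).

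For any $(\vecV,\vecU)\in\calW''$ the ReLU acts diagonally on the data: $[\vecu_j^\top\vecx_i]_+=\rho\beta_j$ if $i\in G_j$ and $0$ otherwise. Therefore
\[
\sum_{i=1}^m \xi_i\,f(\vecx_i)\;=\;\rho\sum_{j=1}^h v_j\,\beta_j\,S_j,\qquad S_j:=\sum_{i\in G_j}\xi_i,
\]
and the choice $v_j=\alpha_j\,\mathrm{sign}(S_j)$ attains the supremum over $|v_j|\leq\alpha_j$, yielding the explicit expression $\rho\sum_j \alpha_j\beta_j\,|S_j|$. Hence
\[
\calR_\calS(\calF_{\calW'})\;\geq\;\calR_\calS(\calF_{\calW''})\;=\;\frac{\rho}{m}\sum_{j=1}^h \alpha_j\beta_j\,\E_\xi|S_j|.
\]
Each $S_j$ is a sum of $|G_j|=m/h$ independent Rademacher signs drawn from disjoint subsets of $\xi$, so Khintchine's inequality applied to each block independently gives $\E|S_j|\gtrsim \sqrt{m/h}$. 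Substituting and re-expressing $\rho\sqrt{m}=\|\vecX\|_F$ delivers the claimed lower bound up to absolute constants.

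The delicate point I expect to be the main obstacle is arranging the construction so that (i) the spectral bound $\|\vecU\|_2\leq\max_j\beta_j$, which in general couples the rows of $\vecU$ and forces them close to orthogonal, does not cost more than a constant factor, and (ii) the $h$ per-unit contributions are summed without incurring a $\sqrt h$ loss from Cauchy--Schwarz. The diagonal $\vecU$ handles (i) by saturating the spectral bound naturally; and the partition of the data into disjoint blocks $G_j$ handles (ii) because the random variables $\{|S_j|\}_{j}$ are supported on disjoint coordinates of $\xi$ and hence truly independent, so $\E\sum_j\alpha_j\beta_j|S_j|=\sum_j\alpha_j\beta_j\E|S_j|$ with no further loss. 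The remaining steps are routine Rademacher-averaging bookkeeping.
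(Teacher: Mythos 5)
There is a genuine gap, and it sits exactly at the point the theorem is designed to overcome. With your blocks $G_j$ of size $m/h$ you have $\E|S_j|\asymp\sqrt{m/h}$, so your final expression is
\[
\frac{\rho}{m}\sum_{j=1}^h\alpha_j\beta_j\,\E|S_j|\;\asymp\;\frac{\rho}{m}\sum_{j=1}^h\alpha_j\beta_j\sqrt{\tfrac{m}{h}}\;=\;\frac{\|\vecX\|_F}{m\sqrt{h}}\sum_{j=1}^h\alpha_j\beta_j,
\]
which is a factor $\sqrt{h}$ short of the claimed $\Omega\bigl(\sum_j\alpha_j\beta_j\|\vecX\|_F/m\bigr)$; the sentence ``substituting delivers the claimed bound'' hides an arithmetic slip. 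This loss is not fixable within your construction. Because $\vecU$ is diagonal with nonnegative entries and the data sit on the positive coordinate axes, every pre-activation is nonnegative, the ReLU is the identity on the data, and your sub-family is a linear class $\vecx\mapsto\vecw^\top\vecx$ with $|w_j|\le\alpha_j\beta_j$. For any linear class one has $\calR_\calS\le\sup_{\vecw}\|\vecw\|_2\,\|\vecX\|_F/m\le\|(\alpha_j\beta_j)_j\|_2\,\|\vecX\|_F/m$, which for comparable $\alpha_j\beta_j$ is $\Theta\bigl(\sum_j\alpha_j\beta_j\|\vecX\|_F/(m\sqrt{h})\bigr)$. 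So no choice of data or of diagonal (or otherwise effectively linear) weights can reach the stated rate: the theorem's content is precisely the $\sqrt{h}$ gap between ReLU networks and linear ones, as the paper notes when it says the bound fails for linear networks.

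The missing idea is to make $\vecU$ \emph{dense} while keeping the spectral and row-norm constraints, and to let the ReLU do real work. The paper takes $\vecU(\bxi)=\mathrm{Diag}(\bbeta)\,\tF(\bxi)$ where $\tF$ is built from a Hadamard matrix with entries $\pm 2^{-k/2}$, with columns zeroed out adaptively according to the signs of the block sums $\eps_j(\bxi)$. Orthogonality keeps $\|\vecU\|_2\le\max_j\beta_j$ and each row norm at most $\beta_j$, but now every data point $\vece_j$ activates about half of \emph{all} $h$ hidden units, so its group contributes $\inner{\vecs}{[\vecf_j]_+}[\eps_j]_+\gtrsim 2^{-k/2}\bigl(\sum_i\alpha_i\beta_i\bigr)[\eps_j]_+$ with $\vecs=(\alpha_i\beta_i)_i$; summing $h$ groups each worth $\sqrt{m/h}$ recovers the full $\sum_i\alpha_i\beta_i\sqrt{m}$. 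The rectification of the $\pm 2^{-k/2}$ entries (together with the $\bxi$-dependent choice of which columns to keep) is exactly where the extra $\sqrt{h}$ comes from; your blocking of the data and the Khintchine step are the same as the paper's, but the diagonal first layer cannot supply it.
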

The proof is given in the supplementary Section~\ref{sec:proof_lower}. Clearly, $\calW'\subseteq \calW$ since it has an extra constraint. The above lower bound matches the first term, $\frac{\sum_{i=1}^h \alpha_i\beta_i\|\vecX\|_F}{m\gamma}$, in the upper bound of Theorem~\ref{thm:rademacher}, upto $\frac{1}{\gamma}$ which comes from the $\frac{1}{\gamma}$-Lipschitz constant of the ramp loss $l_\gamma$.  Also when $c=1$ and $\bbeta = \bm{0}$, $$\calR_\calS(\calF_{\calW})= \calR_{[\vecU_0\circ \calS]_+}(\calF_\calV) =\sum_{j=1}^h \Omega \left(\frac{ \alpha_j\|\vecu_j^0\vecX\|_2}{m}\right) = \Omega \left(\frac{\sum_{j=1}^h \alpha_j\|\vecu_j^0\vecX\|_2}{m} \right), $$ where $\calF_\calV = \{f(\vecx)=\vecV\vecx\mid \vecV\in \mathR^{1\times h}, \|\vecv_j\|\leq \alpha_j\}$. In other words, when $\bbeta=\bm{0}$, the function class $\calF_{\calW'}$ on $\calS=\{\vecx_i\}_{i=1}^m$ is equivalent to the linear function class $\calF_{\calV}$ on $[\vecU_0\circ\calS]_+ = \{[\vecU_0\vecx_i]_+\}_{i=1}^m$, and therefore we have the above lower bound. This shows that the upper bound provided in Theorem~\ref{thm:rademacher} is tight. It also indicates that even if we have more information, such as bounded spectral norm with respect to the reference matrix is small (which effectively bounds the Lipschitz of the network), we still cannot improve our upper bound. 

To our knowledge all previous capacity lower bounds for spectral norm bounded classes of neural networks correspond to the Lipschitz constant of the network. Our lower bound strictly improves over this, and shows gap between Lipschitz constant of the network (which can be achieved by even linear models) and capacity of neural networks.  This lower bound is non-trivial in the sense that the smaller function class excludes the neural networks with all rank-1 matrices as weights, and thus shows $\Theta(\sqrt{h})$-gap between the neural networks with and without ReLU. The lower bound therefore does not hold for linear networks. Finally, one can extend the construction in this bound to more layers by setting all weight matrices in intermediate layers to be the Identity matrix. 

\paragraph{Comparison with existing results.} In particular, \citet{bartlett2017spectrally} has proved a lower bound of $\Omega \left( \frac{s_1 s_2\|\vecX\|_F}{m} \right)$, for the function class defined by the parameter set,
\begin{equation}
\calW_{\text{spec}}=\left\{(\vecV,\vecU) \mid \vecV\in \R^{1\times h},\vecU \in \R^{h\times d},\norm{\vecV}_2\leq s_1,\norm{\vecU}_2 \leq s_2\right\}.
\end{equation}
Note that $s_1 s_2$ is a Lipschitz bound of the function class $\calF_{\calW_{spec}}$. \\
Given $\calW_{spec}$ with bounds $s_1$ and $s_2$, choosing $\balpha$ and $\bbeta$ such that $\norm{\balpha}_2=s_1$ and $\max_{i\in [h] }\beta_i=s_2$ results in $\calW'\subset \calW_{spec}$. Hence we get the following result from Theorem \ref{thm:lower}.
\begin{corollary}
$\forall h=d\leq m$, $s_1,s_2\geq 0$, $\exists\calS\in \R^{d\times m}$ such that $\calR_\calS(\calF_{\calW_{spec}})  =\Omega \left(\frac{s_1s_2\sqrt{h}\|\vecX\|_F}{m} \right)$.
\end{corollary}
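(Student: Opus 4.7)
The plan is to derive the corollary from Theorem~\ref{thm:lower} by choosing $\balpha$ and $\bbeta$ so that (i) the resulting parameter set $\calW'$ is contained in $\calW_{spec}$, and (ii) the quantity $\sum_{j=1}^h \alpha_j\beta_j$ appearing in the lower bound is as large as possible. Once this is done, monotonicity of Rademacher complexity under the inclusion $\calF_{\calW'} \subseteq \calF_{\calW_{spec}}$ yields the claimed bound on $\calR_\calS(\calF_{\calW_{spec}})$.

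First I would translate the constraints defining $\calW'$ into layer-wise spectral constraints. Since $c=1$, each $\vecv_j$ is a scalar, so $\norm{\vecV}_2^2 = \sum_j \alpha_j^2 = \norm{\balpha}_2^2$. Since $\vecU^0=\bm{0}$, the extra condition $\norm{\vecU-\vecU^0}_2 \leq \max_j \beta_j$ becomes $\norm{\vecU}_2 \leq \max_j \beta_j$. Hence a sufficient condition for $\calW' \subseteq \calW_{spec}$ is $\norm{\balpha}_2 \leq s_1$ and $\max_j \beta_j \leq s_2$.

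Next I would optimize $\sum_j \alpha_j \beta_j$ subject to these two constraints. Saturating the pointwise constraint by setting $\beta_j = s_2$ for all $j$ reduces the objective to $s_2 \sum_j \alpha_j$; Cauchy-Schwarz then gives $\sum_j \alpha_j \leq \sqrt{h}\,\norm{\balpha}_2 \leq s_1\sqrt{h}$, with equality when $\alpha_j = s_1/\sqrt{h}$ for every $j$. With this equalized choice of $(\balpha,\bbeta)$, Theorem~\ref{thm:lower} produces some $\calS \subset \R^d$ with $\calR_\calS(\calF_{\calW'}) = \Omega\bigl(s_1 s_2 \sqrt{h}\,\norm{\vecX}_F/m\bigr)$.

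Finally, since $\calF_{\calW'} \subseteq \calF_{\calW_{spec}}$, taking a supremum over the larger class only makes the Rademacher complexity larger, so $\calR_\calS(\calF_{\calW_{spec}}) \geq \calR_\calS(\calF_{\calW'}) = \Omega\bigl(s_1 s_2 \sqrt{h}\,\norm{\vecX}_F/m\bigr)$, which is exactly the corollary. There is no serious obstacle here; the only real content is the observation that the per-unit bounds of Theorem~\ref{thm:lower} can \emph{encode} the layer-wise spectral bounds of $\calW_{spec}$, and that equalizing the per-unit capacities is precisely what converts the Lipschitz-type lower bound $s_1 s_2$ into the stronger $s_1 s_2 \sqrt{h}$ bound that separates neural networks from linear predictors.
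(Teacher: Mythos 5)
Your proposal is correct and matches the paper's own argument: choose $\norm{\balpha}_2=s_1$ and $\max_j\beta_j=s_2$ so that $\calW'\subseteq\calW_{spec}$, equalize $\alpha_j=s_1/\sqrt{h}$, $\beta_j=s_2$ to make $\sum_j\alpha_j\beta_j=s_1s_2\sqrt{h}$, and conclude by monotonicity of Rademacher complexity under class inclusion. The only nitpick is that $\norm{\vecV}_2^2=\sum_j v_j^2\leq\sum_j\alpha_j^2$ is an inequality rather than the equality you wrote, but this does not affect the argument.
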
 
Hence our result improves the lower bound in \citet{bartlett2017spectrally} by a factor of $\sqrt{h}$. Theorem 7 in \citet{golowich2017size} also gives a $\Omega(s_1s_2\sqrt{c})$ lower bound, $c$ is the number of outputs of the network, for the composition of 1-Lipschitz loss function and neural networks with bounded spectral norm, or $\infty$-Schatten norm. Our above result even improves on this lower bound.

\section{Generalization for Extremely Large Values of $h$}\label{sec:largenets}

In this section we present a tighter bound that reduces the influence of this additive term and decreases even for larger values of $h$. The main new ingredient in the proof is the Lemma \ref{lem:lp-cover}, in which we construct a cover for the $\ell_p$ ball with entrywise dominance. 

\begin{theorem}\label{thm:generalization_lp}
For any $h,p\geq 2$, $\gamma>0$, $\delta\in(0,1)$ and $\vecU^0\in \R^{h\times d}$, with probability $1-\delta$ over the choice of the training set $\calS=\{\vecx_i\}_{i=1}^m\subset \R^d$, for any function $f(\vecx)=\vecV[\vecU \vecx]_{+}$ such that $\vecV\in \R^{c\times h}$ and $\vecU\in\R^{h\times d}$, the generalization error is bounded as follows:
\begin{small}\begin{equation*}
L_{0}(f) \leq \hatL_\gamma(f) +  \tildeO\left(\frac{\sqrt{c}h^{\frac{1}{2}-\frac{1}{p}}\norm{\vecV^T}_{p,2}\left(h^{\frac{1}{2}-\frac{1}{p}}\norm{\vecU-\vecU^0}_{p,2}\norm{\vecX}_F+\norm{\vecU^0\vecX}_F\right)}{\gamma m } +\sqrt{\frac{e^{-p}h}{m}}\right),
\end{equation*}\end{small}
where $\norm{.}_{p,2}$ is the $\ell_p$ norm of the row $\ell_2$ norms. 
\end{theorem}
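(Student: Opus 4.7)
The plan is to follow the structure of the proof of Theorem~\ref{thm:generalization} but to replace every $\ell_2$-based step with an $\ell_p$-based one, and to pay for the switch with a tighter entrywise-dominating cover of the $\ell_p$ ball that exploits its approximate sparsity. The starting point is Theorem~\ref{thm:rademacher}, which for any fixed $(\bm{\alpha},\bm{\beta})$ bounds $\calR_\calS(\ell_\gamma\circ\calF_\calW)$ by $\tildeO\bigl(\frac{\sqrt{c}}{\gamma m}\sum_j \alpha_j(\beta_j\|\vecX\|_F+\|\vecu_j^0\vecX\|_2)\bigr)$; combined with \eqref{eq:generalization} this yields a margin bound for every network whose unit impacts and capacities are coordinate-wise dominated by $(\bm{\alpha},\bm{\beta})$.

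\emph{Reduction to $\ell_p$ norms.} Let $\alpha_j^\star=\|\vecv_j\|_2$ and $\beta_j^\star=\|\vecu_j-\vecu_j^0\|_2$. Cauchy--Schwarz together with the embedding $\|v\|_2\le h^{1/2-1/p}\|v\|_p$, valid for $p\ge 2$, gives
\[
\sum_j \alpha_j^\star\beta_j^\star\le \|\bm{\alpha}^\star\|_2\|\bm{\beta}^\star\|_2\le h^{1-2/p}\|\vecV^T\|_{p,2}\|\vecU-\vecU^0\|_{p,2},
\]
\[
\sum_j \alpha_j^\star\|\vecu_j^0\vecX\|_2\le \|\bm{\alpha}^\star\|_2\|\vecU^0\vecX\|_F\le h^{1/2-1/p}\|\vecV^T\|_{p,2}\|\vecU^0\vecX\|_F,
\]
which reproduces the first term of the claimed bound.

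\emph{Covering and union bound.} To make the bound hold uniformly over $(\vecV,\vecU)$, I would build a countable cover $\calC$ of admissible pairs so that every network admits some $(\bm{\alpha},\bm{\beta})\in\calC$ with $\alpha_j\ge\alpha_j^\star$, $\beta_j\ge\beta_j^\star$, and $\|\bm{\alpha}\|_p,\|\bm{\beta}\|_p$ within a constant factor of $\|\bm{\alpha}^\star\|_p,\|\bm{\beta}^\star\|_p$. A standard dyadic partitioning over the scales of the two $\ell_p$ radii reduces this to covering a single nonnegative $\ell_p$ ball by entrywise-dominating points, which is exactly what Lemma~\ref{lem:lp-cover} provides, with $\log|\calC|=\tildeO(e^{-p}h)$. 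Union-bounding Theorem~\ref{thm:rademacher} over $\calC$ at failure level $\delta/|\calC|$ then contributes an additive $\tildeO(\sqrt{\log|\calC|/m})=\tildeO(\sqrt{e^{-p}h/m})$ and, together with the first step, yields the theorem.

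\emph{Main obstacle.} The technical crux is Lemma~\ref{lem:lp-cover}. A naive per-coordinate multiplicative grid would give $(\log h)^h$ points and hence a useless $\tildeO(\sqrt{h/m})$ additive term that is independent of $p$. Reaching $\log|\calC|=\tildeO(e^{-p}h)$ requires exploiting that any point in the nonnegative $\ell_p$ ball of radius $R$ has at most $2^{kp}$ coordinates above $2^{-k}R$, counting assignments of coordinates to scales with $\binom{h}{2^{kp}}$-style bounds, and optimizing the grid ratio so that $\binom{h}{k}\le(eh/k)^k$ produces the $e^{-p}$ improvement. A further subtlety is that the cover must dominate coordinate-wise rather than in $\ell_p$ distance, so that each network actually lies in some $\calF_\calW$ from the cover; this rules out off-the-shelf $\ell_p$-metric covers and forces simultaneous rounding up on all coordinates.
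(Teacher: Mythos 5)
Your proposal is correct and follows essentially the same route as the paper: Theorem~\ref{thm:rademacher}, the embedding $\|\cdot\|_2\le h^{1/2-1/p}\|\cdot\|_p$, an entrywise-dominating cover of the $\ell_p$ ball (Lemma~\ref{lem:lp-cover}), and a union bound over a grid of the two radii (this is exactly the chain Lemma~\ref{lem:pre_lp} $\to$ Lemma~\ref{lem:generalization_mu} $\to$ Lemma~\ref{lem:generalization_lp}). The only point of divergence is your sketched proof of the covering lemma via dyadic level sets and $\binom{h}{2^{kp}}$ counting; the paper instead quantizes the $p$-th powers of the coordinates to an arithmetic grid of step $\beta^p/K$ with $K=\lceil h/(e^p-1)\rceil$ and counts compositions, which yields $\ln N_{p,h}\le(\lceil e^{1-p}h\rceil-1)\ln(eh)$ directly and somewhat more cleanly.
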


In contrast to Theorem \ref{thm:generalization} the additive $\sqrt{h}$ term is replaced by $\sqrt{\frac{h}{2^p}}$.  For $p$ of order $\ln h$,   $\sqrt{\frac{h}{2^p}} \approx $ constant improves on the $\sqrt{h}$ additive term in Theorem \ref{thm:generalization}. However the norms in the first term $\norm{\vecV}_F$ and $\norm{\vecU -\vecU_0}_F$ are replaced by $h^{\frac{1}{2}-\frac{1}{p}}\norm{\vecV^T}_{p,2}$ and $h^{\frac{1}{2}-\frac{1}{p}}\norm{\vecU -\vecU_0}_{p,2}$. For $p\approx \ln h$, $h^{\frac{1}{2}-\frac{1}{p}}\norm{\vecV^T}_{p,2} \approx h^{\frac{1}{2}-\frac{1}{\ln h}}\norm{\vecV^T}_{\ln h,2}$ which is a tight upper bound for $\norm{\vecV}_F$ and is of the same order if all rows of $\vecV$ have the same norm - hence giving a tighter bound that decreases with $h$ for larger values. In particular for $p=\ln h$ we get the following bound.

\begin{corollary}
Under the settings of Theorem \ref{thm:generalization_lp}, with probability $1-\delta$ over the choice of the training set $\calS=\{\vecx_i\}_{i=1}^m$, for any function $f(\vecx)=\vecV[\vecU \vecx]_{+}$, the generalization error is bounded as follows: \begin{small}\begin{align*}
L_{0}(f) &\leq \hatL_\gamma(f) +  \tildeO\left(\frac{\sqrt{c}h^{\frac{1}{2}-\frac{1}{\ln h}} \norm{\vecV^T}_{\ln h,2} \left(h^{\frac{1}{2}-\frac{1}{\ln h}} \norm{\vecU-\vecU^0}_{\ln h,2} \norm{\vecX}_F + \norm{\vecU_0 \vecX}_F \right)}{\gamma m } \right)\\
& \leq \hatL_\gamma(f) +  \tildeO\left(\frac{\sqrt{c}h^{\frac{1}{2}-\frac{1}{\ln h}} \norm{\vecV^T}_{\ln h,2} \left(h^{\frac{1}{2}-\frac{1}{\ln h}} \norm{\vecU-\vecU^0}_{\ln h,2} + \norm{\vecU_0 }_2 \right) \sqrt{\frac{1}{m}\sum_{i=1}^m \norm{\vecx_i}_2^2}}{\gamma \sqrt{m} } \right).
\end{align*}\end{small}
\end{corollary}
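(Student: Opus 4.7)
The plan is to derive this corollary as a direct specialization of Theorem \ref{thm:generalization_lp} by plugging in $p=\ln h$, so the work is essentially algebraic bookkeeping on the two terms of that theorem. I would first observe that the second additive term collapses: with $p=\ln h$ we have $e^{-p}=1/h$, hence $\sqrt{e^{-p}h/m}=\sqrt{1/m}$. This is of the same order as the $\sqrt{\ln(2/\delta)/m}$ deviation term already absorbed into the $\tildeO(\cdot)$ of \eqref{eq:generalization}, so it vanishes from the stated bound. That is the only place where the choice $p=\ln h$ is used in a nontrivial way, and it explains why the corollary has no $\sqrt{h/m}$ penalty at all.

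Next I would substitute $p=\ln h$ into the first term of Theorem \ref{thm:generalization_lp}: the exponent $\tfrac{1}{2}-\tfrac{1}{p}$ becomes $\tfrac{1}{2}-\tfrac{1}{\ln h}$, and the $\ell_{p,2}$ norms become $\ell_{\ln h, 2}$ norms, giving exactly the first displayed inequality of the corollary. No constants change, so polylogarithmic factors hidden in $\tildeO$ are preserved.

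For the second displayed inequality I would use the sub-multiplicative matrix-vector estimate $\norm{\vecU^0\vecx_i}_2 \leq \norm{\vecU^0}_2\norm{\vecx_i}_2$, which implies
\[
\norm{\vecU^0\vecX}_F \;=\; \sqrt{\sum_{i=1}^m \norm{\vecU^0\vecx_i}_2^2} \;\leq\; \norm{\vecU^0}_2 \sqrt{\sum_{i=1}^m \norm{\vecx_i}_2^2}\;=\;\norm{\vecU^0}_2\norm{\vecX}_F,
\]
and similarly factor $\norm{\vecX}_F$ out of the $\norm{\vecU-\vecU^0}_{\ln h,2}\norm{\vecX}_F$ summand. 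Writing $\norm{\vecX}_F=\sqrt{m}\cdot\sqrt{\tfrac{1}{m}\sum_{i=1}^m\norm{\vecx_i}_2^2}$ and cancelling one factor of $\sqrt{m}$ against the $1/m$ in the denominator converts the $1/(\gamma m)$ prefactor into $1/(\gamma\sqrt{m})$, producing the second inequality exactly as stated.

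I do not expect any genuine obstacle: the argument is a direct substitution together with a one-line use of the operator-norm bound on the Frobenius norm of a product. The only mild subtlety is confirming that the hidden polylogarithmic factors in $\tildeO$ remain under control when $p=\ln h$, but since $p$ enters the first term only through the exponent $\tfrac{1}{2}-\tfrac{1}{p}$ and the choice of norm (both of which are written out explicitly in the corollary) and through $\sqrt{e^{-p}h/m}$ (which becomes $O(1/\sqrt m)$), no logarithmic blow-up occurs.
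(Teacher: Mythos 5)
Your proposal is correct and matches the paper's (implicit) derivation: the corollary is obtained exactly by setting $p=\ln h$ in Theorem \ref{thm:generalization_lp}, noting $\sqrt{e^{-p}h/m}=\sqrt{1/m}$ is absorbed into the $\tildeO$, and passing to the second inequality via $\norm{\vecU^0\vecX}_F\leq\norm{\vecU^0}_2\norm{\vecX}_F$ and $\norm{\vecX}_F=\sqrt{m}\cdot\sqrt{\frac{1}{m}\sum_{i=1}^m\norm{\vecx_i}_2^2}$. The only cosmetic caveat, which the paper also glosses over, is that the hypothesis $p\geq 2$ requires $h\geq e^2$ for the substitution $p=\ln h$ to be literally admissible.
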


\section{Discussion}
In this paper we present a new capacity bound for neural networks that decreases with the increasing number of hidden units, and could potentially explain the better generalization performance of larger networks. However our results are currently limited to two layer networks and it is of interest to understand and extend these results to deeper networks. Also while these bounds are useful for relative comparison between networks of different size, their absolute values are still much larger than the number of training samples, and it is of interest to get smaller bounds. Finally we provided a matching lower bound for the capacity improving on the current lower bounds for neural networks. 

In this paper we do not address the question of whether optimization algorithms converge to low complexity networks in the function class considered in this paper, or in general how does different hyper parameter choices affect the complexity of the recovered solutions. It is interesting to understand the implicit regularization effects of the optimization algorithms \citep{NeySalSre15, gunasekar2017implicit, soudry2017implicit} for neural networks, which we leave for future work.

\section*{Acknowledgements}
The authors thank Sanjeev Arora for many fruitful discussions on generalization of neural networks and David McAllester for discussion on the distance to random initialization. This research was supported in part by NSF IIS-RI award 1302662 and Schmidt Foundation.
\bibliographystyle{plainnat}
\bibliography{generalization}

\clearpage
\appendix
\section{Experiments}\label{sec:experiments}

\subsection{Experiments Settings}
Below we describe the setting for each reported experiment.
\paragraph{ResNet18} In this experiment, we trained a pre-activation ResNet18 architecture on CIFAR-10 dataset. The architecture consists of a convolution layer followed by 8 residual blocks (each of which consist of two convolution) and a linear layer on the top. Let $k$ be the number of channels in the first convolution layer. The number of output channels and strides in residual blocks is then $[k,k,2k,2k,4k,4k,8k,8k]$ and $[1,1,1,2,1,2,1,2]$ respectively. Finally, we use the kernel sizes 3 in all convolutional layers. We train 11 architectures where for architecture $i$ we set $k=\lceil2^{2+i/2}\rceil$. In each experiment we train using SGD with mini-batch size 64, momentum 0.9 and initial learning rate 0.1 where we reduce the learning rate to 0.01 when the cross-entropy loss reaches 0.01 and stop when the loss reaches 0.001 or if the number of epochs reaches 1000. We use the reference line in the plots to differentiate the architectures that achieved 0.001 loss. We do not use weight decay or dropout but perform data augmentation by random horizontal flip of the image and random crop of size $28\times28$ followed by zero padding.

\paragraph{Two Layer ReLU Networks} We trained fully connected feedforward networks on CIFAR-10, SVHN and MNIST datasets. For each data set, we trained 13 architectures with sizes from $2^3$ to $2^{15}$ each time increasing the number of hidden units by factor 2. For each experiment, we trained the network using SGD with mini-batch size 64, momentum 0.9 and fixed step size 0.01 for MNIST and 0.001 for CIFAR-10 and SVHN. We did not use weight decay, dropout or batch normalization. For experiment, we stopped the training when the cross-entropy reached 0.01 or when the number of epochs reached 1000. We use the reference line in the plots to differentiate the architectures that achieved 0.01 loss.

\paragraph{Evaluations} For each generalization bound, we have calculated the exact bound including the log-terms and constants. We set the margin to $5$th percentile of the margin of data points. Since bounds in \cite{bartlett2002rademacher} and \cite{neyshabur15b} are given for binary classification, we multiplied \cite{bartlett2002rademacher} by factor $c$ and \cite{neyshabur15b} by factor $\sqrt{c}$ to make sure that the bound increases linearly with the number of classes (assuming that all output units have the same norm). Furthermore, since the reference matrices can be used in the bounds given in \cite{bartlett2017spectrally} and \cite{neyshabur2018pac}, we used random initialization as the reference matrix. When plotting distributions, we estimate the distribution using standard Gaussian kernel density estimation.

\subsection{Supplementary Figures}
Figures~\ref{fig:svhn-measure} and \ref{fig:mnist-measure} show the behavior of several measures on networks with different sizes trained on SVHN and MNIST datasets respectively. The left panel of Figure~\ref{fig:mnist-comparison} shows the over-parametrization phenomenon in MNSIT dataset and the middle and right panels compare our generalization bound to others.
\begin{figure}[h]
	\includegraphics[width=0.245\textwidth]{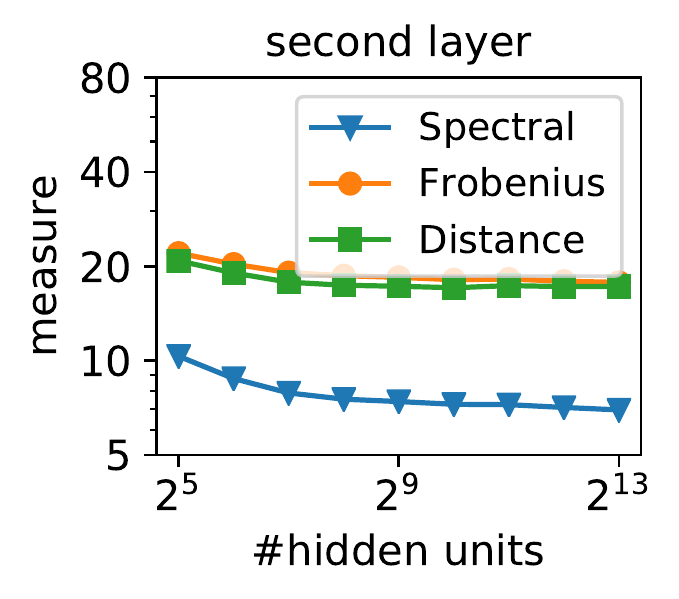}
	\includegraphics[width=0.23\textwidth]{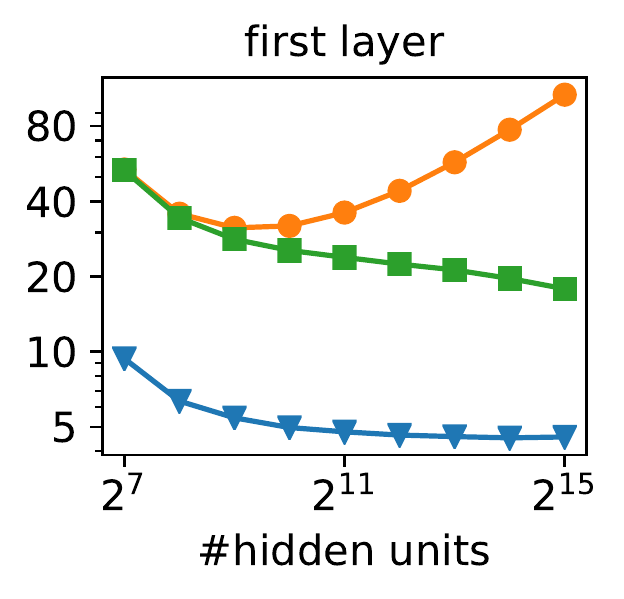}
	\includegraphics[width=0.235\textwidth]{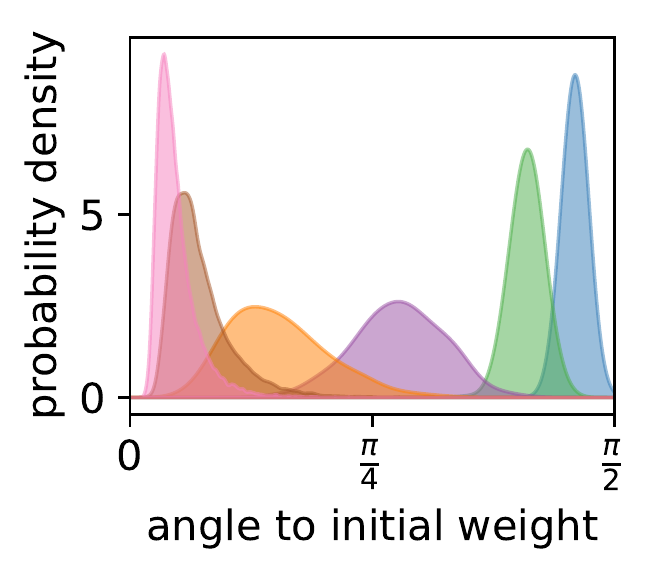}
	\includegraphics[width=0.24\textwidth]{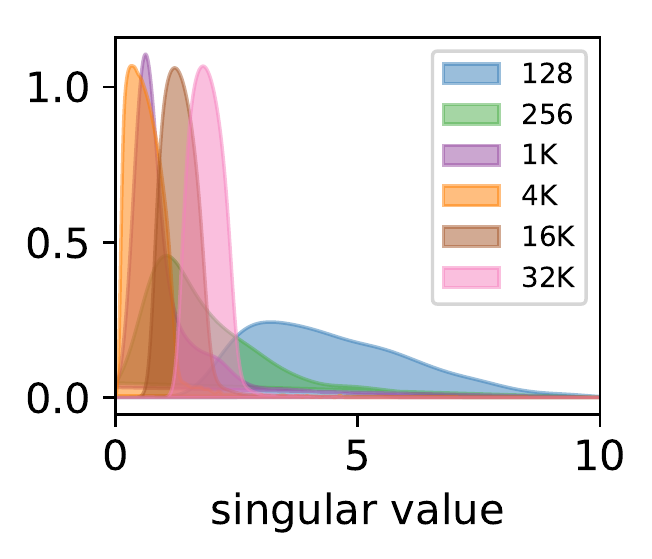}
	\caption{\small Different measures on fully connected networks with a single hidden layer trained on SVHN. From left to right: measure on the output layer, measures in the first layer, distribution of angle to initial weight in the first layer, and singular values of the first layer.}
	\label{fig:svhn-measure}
\end{figure}

\begin{figure}[h]
	\includegraphics[width=0.245\textwidth]{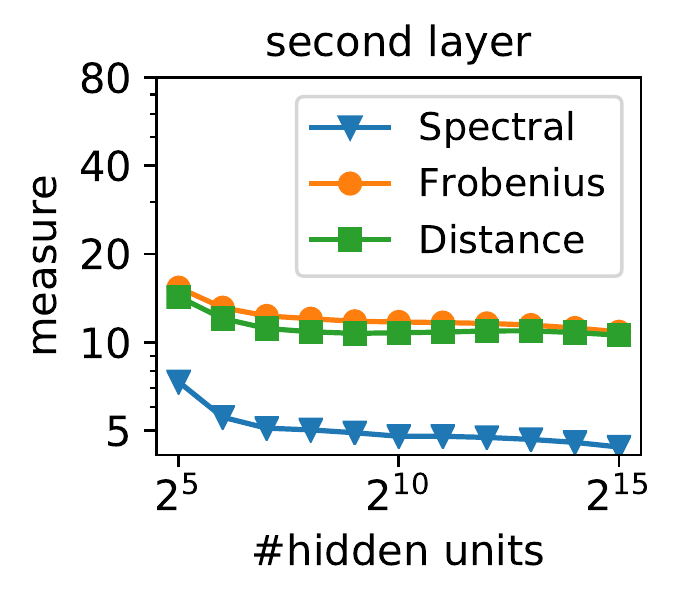}
	\includegraphics[width=0.23\textwidth]{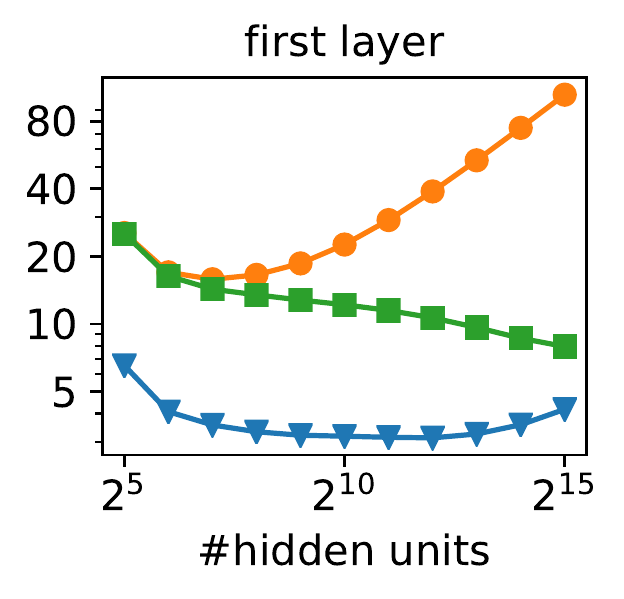}
	\includegraphics[width=0.25\textwidth]{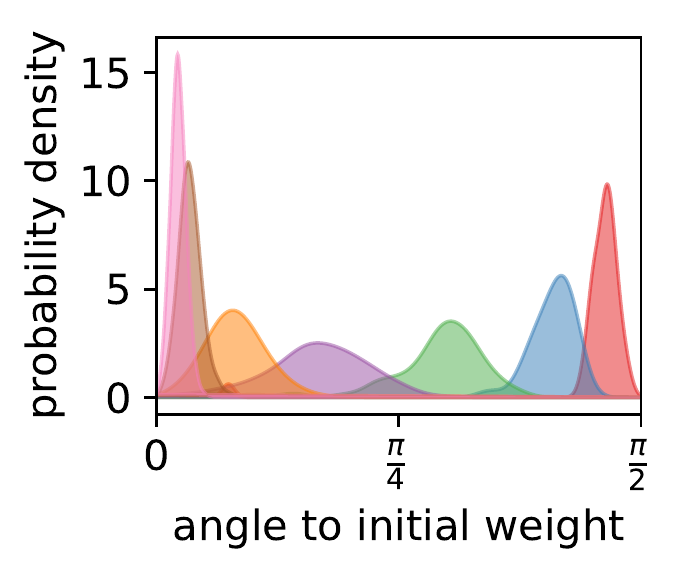}
	\includegraphics[width=0.24\textwidth]{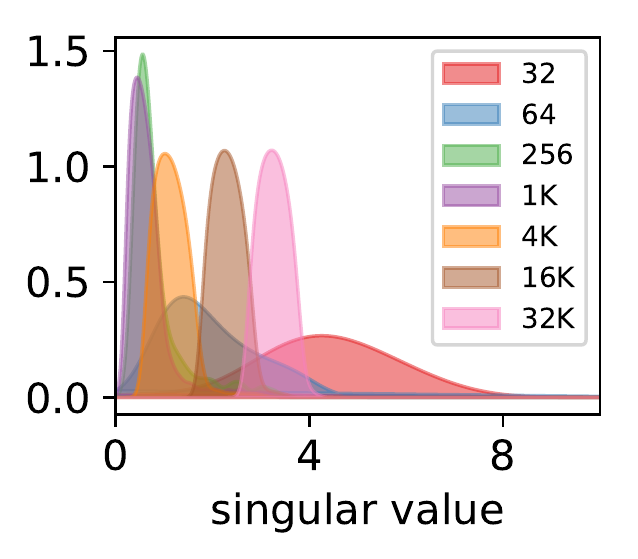}
	\caption{\small Different measures on fully connected networks with a single hidden layer trained on MNIST. From left to right: measure on the output layer, measures in the first layer, distribution of angle to initial weight in the first layer, and singular values of the first layer.}
	\label{fig:mnist-measure}
\end{figure}

\begin{figure}[h]
	\includegraphics[width=0.29\textwidth]{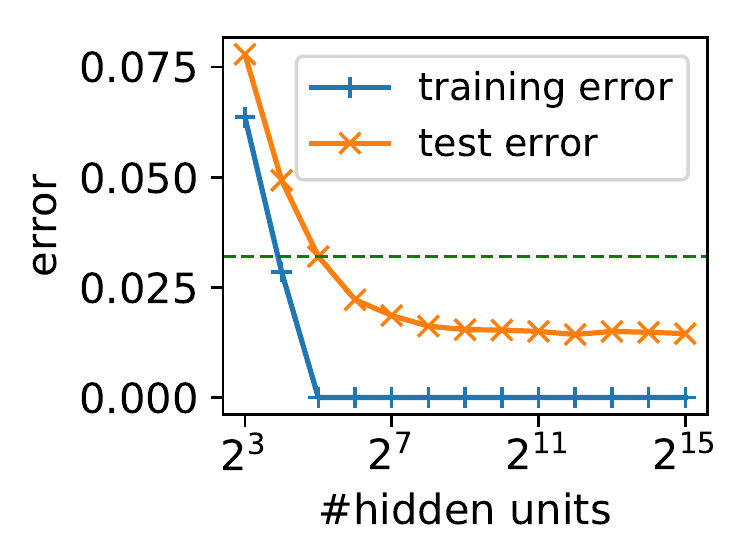}
	\includegraphics[width=0.28\textwidth]{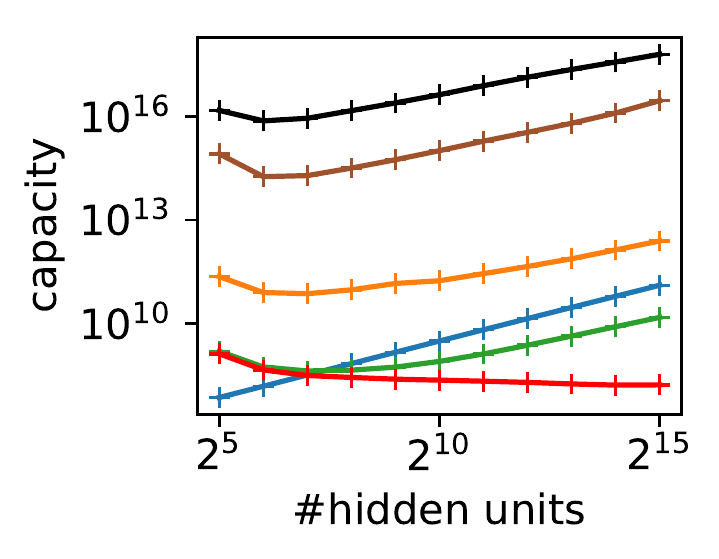}
	\includegraphics[width=0.42\textwidth]{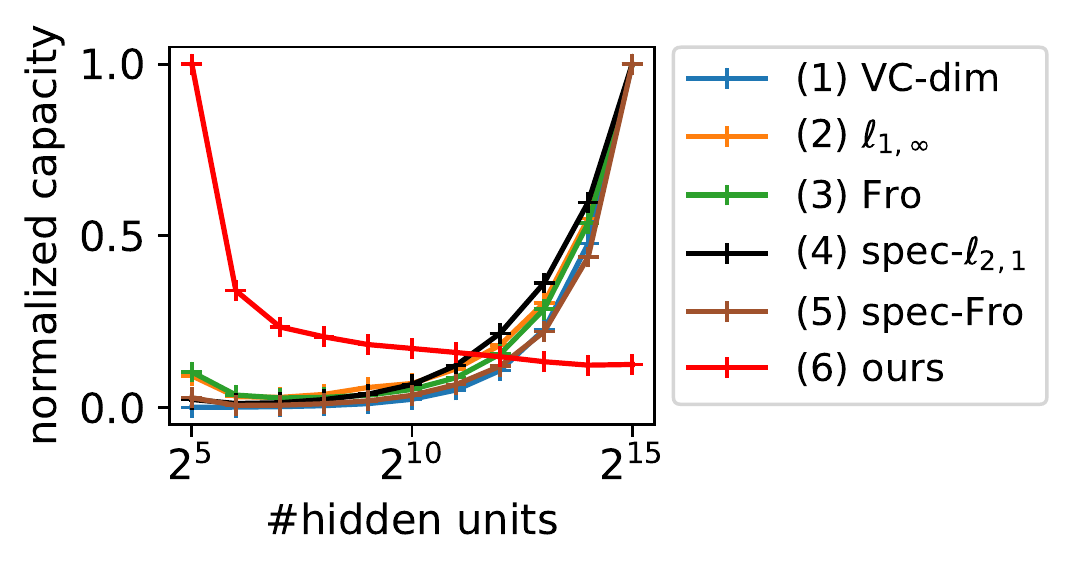}
	\caption{\small Left panel: Training and test errors of fully connected networks trained on MNIST. Middle panel: Comparing capacity bounds on MNIST (normalized). Left panel: Comparing capacity bounds on MNIST (unnormalized).}
	\label{fig:mnist-comparison}
\end{figure}

\section{Proofs}\label{sec:proofs}

\subsection{Proof of Theorem~\ref{thm:rademacher}}
We start by stating a simple lemma which is a vector-contraction inequality for Rademacher complexities and relates the norm of a vector to the expected magnitude of its inner product with a vector of Rademacher random variables. We use the following technical result from \citet{maurer2016vector} in our proof. 
\begin{lemma}[Propostion 6 of \citet{maurer2016vector}]\label{lem:contraction}
	Let $\xi_i$ be the Rademacher random variables. For any vector $\vecv\in \R^d$, the following holds:
	\begin{equation*}
	\norm{\vecv}_2 \leq \sqrt{2} \E_{\xi_i  \sim \left\{\pm 1\right\}, i \in [d]} \left[\abs{ \inner{\vecxi}{\vecv}}\right].
	\end{equation*}
\end{lemma}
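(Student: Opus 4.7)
The claim is the sharp lower Khintchine-type bound $\E|\langle \vecxi,\vecv\rangle| \geq \tfrac{1}{\sqrt 2}\|\vecv\|_2$ for a Rademacher vector $\vecxi \in \{\pm 1\}^d$. Writing $X = \langle\vecxi,\vecv\rangle = \sum_{i=1}^d \xi_i v_i$, the independence of the $\xi_i$ together with $\E\xi_i=0$ and $\xi_i^2=1$ give the second-moment identity $\E[X^2] = \sum_i v_i^2 = \|\vecv\|_2^2$. The goal thus reduces to proving the reverse-Jensen inequality $(\E|X|)^2 \geq \tfrac12 \E[X^2]$.

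My plan is a moment-comparison argument. First I would compute the fourth moment by expanding $X^4 = \sum_{i,j,k,l}\xi_i\xi_j\xi_k\xi_l\, v_iv_jv_kv_l$; the only terms surviving the expectation are those in which the four indices either all coincide or split into two pairs, yielding
$$\E[X^4] = 3\|\vecv\|_2^4 - 2\sum_i v_i^4 \leq 3\|\vecv\|_2^4.$$
Next I would apply H\"older's inequality to the interpolation $X^2 = |X|^{2/3}\cdot|X|^{4/3}$ with conjugate exponents $3/2$ and $3$:
$$\|\vecv\|_2^2 = \E[X^2] \leq (\E|X|)^{2/3}(\E[X^4])^{1/3} \leq 3^{1/3}(\E|X|)^{2/3}\|\vecv\|_2^{4/3},$$
which on rearrangement yields $\E|X| \geq \|\vecv\|_2/\sqrt{3}$. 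This already proves the statement in the weaker form with $\sqrt 3$ in place of $\sqrt 2$, and it suffices for any downstream estimate insensitive to this constant.

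To obtain the sharp constant $\sqrt 2$ (which is attained in the two-dimensional case $v_1 = v_2$, where $\E|X| = \tfrac12(|v_1+v_2|+|v_1-v_2|) = \max(|v_1|,|v_2|)$ and $\|\vecv\|_2 = \sqrt 2\,|v_1|$), one must invoke the sharper Szarek inequality. I would proceed by induction on $d$: conditioning on $\xi_d$ gives $\E|X| = \tfrac12 \E|X_{d-1}+v_d| + \tfrac12 \E|X_{d-1}-v_d|$ with $X_{d-1}=\sum_{i<d}\xi_i v_i$, after which one applies a two-variable convexity lemma that lower-bounds this sum in terms of $\sqrt{\E[X_{d-1}^2]+v_d^2}$ and propagates the inductive hypothesis. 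The main obstacle is precisely this two-variable convexity step: the H\"older route loses the factor $\sqrt{3/2}$, and closing that gap is the technical heart of Szarek's argument. Since the only use of the lemma in the paper is to contract a Rademacher complexity up to an absolute constant, the H\"older-based proof with constant $\sqrt 3$ serves as a fully self-contained alternative if one does not wish to import Szarek's result.
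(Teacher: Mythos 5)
The paper does not actually prove this lemma: it is imported wholesale as Proposition~6 of \citet{maurer2016vector}, i.e.\ Khintchine's inequality with Szarek's optimal constant $1/\sqrt{2}$. Measured against that, your moment-comparison argument is correct as far as it goes. The fourth-moment identity $\E[X^4]=3\norm{\vecv}_2^4-2\sum_i v_i^4\leq 3\norm{\vecv}_2^4$ is right, the H\"older step with exponents $3/2$ and $3$ applied to $X^2=\abs{X}^{2/3}\abs{X}^{4/3}$ is applied correctly, and together with $\E[X^2]=\norm{\vecv}_2^2$ they give $\E\abs{X}\geq\norm{\vecv}_2/\sqrt{3}$. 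That is a complete, self-contained, elementary proof of the lemma with $\sqrt{3}$ in place of $\sqrt{2}$, and your two-dimensional example correctly shows that $\sqrt{2}$ is the best possible constant. What you have not proved is the statement as written: the passage from $\sqrt{3}$ to $\sqrt{2}$ is precisely Szarek's theorem, and your inductive sketch leaves the crucial two-variable convexity lemma unproved, as you yourself flag.

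Whether this residual gap matters depends on how the lemma is consumed, and here your assessment is accurate. The lemma is used twice in the paper --- inside the induction of \lemref{lem:decomposition}, to convert the Euclidean norm of the difference of network outputs into an inner product with a fresh Rademacher vector, and at the end of the proof of \thmref{thm:lower}, to lower-bound $\E\bigl[\abs{\eps_1(\bxi)}\bigr]$ by $\sqrt{n/2}$ --- and in both places only the existence of some absolute constant is needed. Substituting $\sqrt{3}$ for $\sqrt{2}$ would propagate into the explicit constants of \thmref{thm:rademacher} and Lemmas~\ref{lem:pre_lp}--\ref{lem:generalization_lp} but leaves every $\tildeO$ and $\Omega$ statement, and hence Theorems~\ref{thm:generalization}, \ref{thm:generalization_lp} and \ref{thm:lower}, intact. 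So your H\"older route is an acceptable elementary substitute for the citation provided the constant in the lemma (and downstream) is weakened accordingly; to keep the lemma exactly as stated one must either complete Szarek's induction or retain the reference to \citet{maurer2016vector}.
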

The above lemma can be useful to get Rademacher complexities in multi-class settings. The below lemma bounds the Rademacher-like complexity term for linear operators with multiple output centered around a reference matrix. The proof is very simple and similar to that of linear separators. See \cite{bartlett2002rademacher} for similar arguments.
\begin{lemma}\label{lem:linear}
	For any positive integer $c$, positive scaler $r>0$, reference matrix $\vecV^0\in \R^{c\times d}$ and set $\{\vecx_i\}_{i=1}^m \subset \R^d$, the following inequality holds:
	\begin{align*} \E_{\vecxi_i \in \{\pm 1\}^{c}, i \in [m]} \left[\sup_{\norm{\vecV-\vecV^0}_{F}\leq r}\sum_{i=1}^{m} \inner{\vecxi_i}{\vecV\vecx_i}\right] \leq r\sqrt{c} \norm{\vecX}_F.
	\end{align*}
\end{lemma}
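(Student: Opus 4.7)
The plan is a straightforward Cauchy--Schwarz plus Jensen argument, after first removing the reference matrix by symmetry of the Rademacher variables.

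First, I would write any feasible $\vecV$ as $\vecV = \vecV^0 + \vecW$ with $\norm{\vecW}_F \le r$, so that
\[
\sum_{i=1}^m \inner{\vecxi_i}{\vecV\vecx_i} \;=\; \sum_{i=1}^m \inner{\vecxi_i}{\vecV^0\vecx_i} \;+\; \sup_{\norm{\vecW}_F\le r}\sum_{i=1}^m \inner{\vecxi_i}{\vecW\vecx_i}.
\]
The first term does not depend on $\vecW$, and when we take expectation over the Rademacher variables $\vecxi_i$ it vanishes by symmetry. Hence it suffices to bound $\E_{\vecxi}\bigl[\sup_{\norm{\vecW}_F\le r}\sum_i \inner{\vecxi_i}{\vecW\vecx_i}\bigr]$, i.e.\ we may assume $\vecV^0=\bm 0$.

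Next, I would rewrite the inner sum as a Frobenius inner product with the random matrix $\vecM := \sum_{i=1}^m \vecxi_i \vecx_i^\top$:
\[
\sum_{i=1}^m \inner{\vecxi_i}{\vecW\vecx_i} \;=\; \bigl\langle \vecW,\; \vecM \bigr\rangle_F.
\]
Cauchy--Schwarz in Frobenius inner product then gives $\sup_{\norm{\vecW}_F\le r} \langle \vecW,\vecM\rangle_F \le r\,\norm{\vecM}_F$. Taking expectation and applying Jensen's inequality yields $\E[\norm{\vecM}_F] \le \sqrt{\E[\norm{\vecM}_F^2]}$.

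The only remaining step is to compute $\E[\norm{\vecM}_F^2]$. Expanding,
\[
\norm{\vecM}_F^2 \;=\; \sum_{i,j=1}^m \inner{\vecxi_i}{\vecxi_j}\,\inner{\vecx_i}{\vecx_j},
\]
and by independence and mean-zero of the Rademacher entries the cross terms $i\ne j$ vanish in expectation, while each diagonal term contributes $\E[\norm{\vecxi_i}_2^2]\,\norm{\vecx_i}_2^2 = c\,\norm{\vecx_i}_2^2$. Summing gives $\E[\norm{\vecM}_F^2] = c\,\norm{\vecX}_F^2$, which delivers the bound $r\sqrt{c}\,\norm{\vecX}_F$ claimed. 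I do not anticipate any genuine obstacle; the only subtlety worth care is checking that the reference matrix really does drop out cleanly under the expectation, which it does because $\sup$ and the $\vecV^0$-term decouple additively.
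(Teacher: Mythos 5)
Your proof is correct and is essentially identical to the paper's: both decompose $\vecV$ into $\vecV^0$ plus a Frobenius-ball-constrained remainder, kill the $\vecV^0$ term by the mean-zero property of the Rademacher vectors, apply Cauchy--Schwarz against the random matrix $\sum_i \vecxi_i\vecx_i^\top$, and finish with Jensen and the computation $\E\bigl[\norm{\sum_i\vecxi_i\vecx_i^\top}_F^2\bigr]=c\norm{\vecX}_F^2$. No gaps.
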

\begin{proof}
	\begin{align*}
	&\E_{\vecxi_i \in \{\pm 1\}^{c}, i \in [m]} \left[\sup_{\norm{\vecV-\vecV^0}_{F}\leq r}\sum_{i=1}^{m} \inner{\vecxi_i}{\vecV\vecx_i}\right]\\
	& \qquad \qquad = \E_{\vecxi_i \in \{\pm 1\}^{c}, i \in [m]} \left[\sup_{\norm{\vecV-\vecV_0}_{F}\leq r}\inner{\vecV}{\sum_{i=1}^{m} \vecxi_i \vecx_i^\top} \right]\\
	&\qquad \qquad = \E_{\vecxi_i \in \{\pm 1\}^{c}, i \in [m]} \left[\sup_{\norm{\vecV-\vecV_0}_{F}\leq r}\inner{\vecV-\vecV_0 + \vecV_0}{\sum_{i=1}^{m} \vecxi_i \vecx_i^\top} \right]\\
	&\qquad \qquad = \E_{\vecxi_i \in \{\pm 1\}^{c}, i \in [m]} \left[\sup_{\norm{\vecV-\vecV_0}_{F}\leq r}\inner{\vecV-\vecV_0}{\sum_{i=1}^{m} \vecxi_i \vecx_i^\top} \right]+\E_{\vecxi_i \in \{\pm 1\}^{c}, i \in [m]} \left[\sup_{\norm{\vecV_0}_{F}\leq r}\inner{\vecV_0}{\sum_{i=1}^{m} \vecxi_i \vecx_i^\top} \right]\\
	&\qquad \qquad = \E_{\vecxi_i \in \{\pm 1\}^{c}, i \in [m]} \left[\sup_{\norm{\vecV-\vecV_0}_{F}\leq r}\inner{\vecV-\vecV_0}{\sum_{i=1}^{m} \vecxi_i \vecx_i^\top} \right]+\E_{\vecxi_i \in \{\pm 1\}^{c}, i \in [m]} \left[\inner{\vecV_0}{\sum_{i=1}^{m} \vecxi_i \vecx_i^\top} \right]\\
	&\qquad \qquad = \E_{\vecxi_i \in \{\pm 1\}^{c}, i \in [m]} \left[\sup_{\norm{\vecV-\vecV_0}_{F}\leq r}\inner{\vecV-\vecV_0}{\sum_{i=1}^{m} \vecxi_i \vecx_i^\top} \right]\\
	&\qquad \qquad \leq r\E_{\vecxi_i \in \{\pm 1\}^{c}, i \in [m]} \left[\norm{\sum_{i=1}^{m} \vecxi_i \vecx_i^\top}_F \right]\\
	&\qquad \qquad \stackrel{(i)}{\leq} r\E_{\vecxi_i \in \{\pm 1\}^{c}, i \in [m]} \left[\norm{\sum_{i=1}^{m} \vecxi_i \vecx_i^\top}^2_F \right]^{1/2}\\
	&\qquad \qquad =r \left(\sum_{j=1}^c\E_{\xi\in \{\pm 1\}^{m}} \left[\norm{\sum_{i=1}^{m} \xi_i \vecx_i^\top}^2_F \right]\right)^{1/2}\\
	&\qquad \qquad = r\sqrt{c} \norm{\vecX}_F.
	\end{align*}
	$(i)$ follows from the Jensen's inequality. 
\end{proof}
We next show that the Rademacher complexity of the class of networks defined in  \eqref{eq:class} and \eqref{eq:param} can be decomposed to that of hidden units.

\begin{lemma}[Rademacher Decomposition]\label{lem:decomposition}
Given a training set $\calS=\{\vecx_i\}_{i=1}^m$ and $\gamma>0$, Rademacher complexity of the class $\calF_\calW$ defined in equations \eqref{eq:class} and \eqref{eq:param} is bounded as follows:
\begin{align*}
\calR_\calS(\ell_\gamma \circ \calF_\calW) &\leq \frac{2}{\gamma m}\sum_{j=1}^h  \E_{\vecxi_i \in \{\pm 1\}^{c}, i \in [m]} \left[\sup_{ \norm{\vecv_j}_2\leq \alpha_j} \sum_{i=1}^{m} (\rho_{ij} +\beta_j\norm{\vecx_i}_2)\inner{\vecxi_i}{\vecv_j}\right] \\ &+ \frac{2}{\gamma m}\sum_{j=1}^h\E_{\vecxi \in \{\pm 1\}^m} \left[\sup_{\norm{\vecu_j-\vecu^0_j}_2 \leq \beta_j }\sum_{i=1}^{m} \xi_{i} \alpha_j\inner{\vecu_j}{\vecx_i}\right].
\end{align*}
\end{lemma}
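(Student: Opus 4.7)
The plan is to combine a vector-contraction step for the margin loss with a novel per-hidden-unit decomposition of the network, and then to decouple the resulting bilinear supremum in $(\vecv_j,\vecu_j)$.

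First I would apply Maurer's vector-contraction inequality: the ramp loss composed with the margin operator is $\sqrt{2}/\gamma$-Lipschitz in the score vector with respect to $\ell_2$ (the margin operator is $\sqrt{2}$-Lipschitz as a difference of two coordinates, and $\ell_\gamma$ is $1/\gamma$-Lipschitz in its scalar argument), so symmetrization followed by Maurer's inequality produces
\[
\calR_\calS(\ell_\gamma \circ \calF_\calW) \;\leq\; \frac{2}{\gamma m}\,\E_{\vecxi}\sup_{(\vecU,\vecV)\in\calW}\sum_{i=1}^m \inner{\vecxi_i}{\vecV[\vecU\vecx_i]_+},
\]
with independent Rademacher vectors $\vecxi_i\in\{\pm1\}^c$. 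I would then expand $\vecV[\vecU\vecx_i]_+ = \sum_j \vecv_j[\vecu_j\vecx_i]_+$ and, crucially, use that the constraints defining $\calW$ are separable across hidden units to push the supremum inside the sum over $j$, obtaining
\[
\E_\vecxi\sup_\calW \sum_{i,j}\inner{\vecxi_i}{\vecv_j}[\vecu_j\vecx_i]_+ \;\leq\; \sum_{j=1}^h \E_\vecxi\sup_{\vecv_j,\vecu_j}\sum_{i=1}^m\inner{\vecxi_i}{\vecv_j}[\vecu_j\vecx_i]_+.
\]
This replaces the standard layer-wise splitting (which only uses Lipschitzness of a whole layer) by a hidden-unit-wise splitting that exploits the linear structure within each layer.

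For each hidden unit I would then split $[\vecu_j\vecx_i]_+ = \rho_{ij} + \bigl([\vecu_j\vecx_i]_+ - \rho_{ij}\bigr)$ with $\rho_{ij}=[\vecu_j^0\vecx_i]_+$. The first piece does not depend on $\vecu_j$ and immediately yields the $\rho_{ij}\inner{\vecxi_i}{\vecv_j}$ contribution in Term~1. For the residual, the $1$-Lipschitz and monotone properties of ReLU give $[\vecu_j\vecx_i]_+-\rho_{ij} = \theta_i(\vecu_j-\vecu_j^0)\vecx_i$ for some $\theta_i=\theta_i(\vecu_j)\in[0,1]$; writing $\theta_i = \tfrac12 + \tfrac12\sigma_i$ with $\sigma_i\in[-1,1]$ and bounding the two pieces separately yields (i) a contribution in which $|(\vecu_j-\vecu_j^0)\vecx_i|\leq \beta_j\norm{\vecx_i}$ combines with Rademacher symmetry (absorbing any sign into $\vecv_j$) to produce the $\beta_j\norm{\vecx_i}\inner{\vecxi_i}{\vecv_j}$ piece of Term~1, and (ii) a contribution in which $\sigma_i$ behaves as a fresh scalar Rademacher, which after integrating $\vecxi_i$ out and applying Cauchy-Schwarz on $\norm{\vecv_j}\leq\alpha_j$ produces the scalar-Rademacher Term~2, $\xi_i\alpha_j\inner{\vecu_j}{\vecx_i}$.

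The main obstacle is this decoupling in the last step. Crude absolute-value bounds such as $\sum_i|\inner{\vecxi_i}{\vecv_j}|\beta_j\norm{\vecx_i}$ are too loose by roughly a factor of $\sqrt m$ and would destroy the tight per-unit scaling needed downstream. The delicate point is executing the $\theta_i$-reparametrization together with Rademacher symmetrization so that the signed structure of $\inner{\vecxi_i}{\vecv_j}$ is preserved on the $\vecv_j$ side of the bound while a genuine scalar Rademacher emerges on the $\vecu_j$ side — it is this asymmetry between the two Rademacher alphabets in the lemma's conclusion that allows the subsequent Jensen and Cauchy-Schwarz steps (Lemmas~\ref{lem:contraction} and~\ref{lem:linear}) to produce both the $\sqrt{c}\,\alpha_j\norm{\vecu_j^0\vecX}_2$ and $(\sqrt{c}+1)\alpha_j\beta_j\norm{\vecX}_F$ terms appearing in Theorem~\ref{thm:rademacher}.
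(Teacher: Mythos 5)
Your opening two steps (a global application of Maurer's vector-contraction inequality, followed by pushing the supremum inside the sum over hidden units using the product structure of $\calW$) are fine, but the proof breaks at exactly the point you flag as "the main obstacle": the decoupling of the bilinear supremum. After your reductions you must bound, for each $j$,
\begin{equation*}
\E_{\vecxi}\ \sup_{\norm{\vecv_j}\leq\alpha_j,\ \norm{\vecu_j-\vecu_j^0}\leq\beta_j}\ \sum_{i=1}^m \inner{\vecxi_i}{\vecv_j}\left([\inner{\vecu_j}{\vecx_i}]_+-\rho_{ij}\right),
\end{equation*}
and your mechanism for doing so is not valid. Writing $[\inner{\vecu_j}{\vecx_i}]_+-\rho_{ij}=\theta_i\,\inner{\vecu_j-\vecu_j^0}{\vecx_i}$ with $\theta_i\in[0,1]$ is correct, but $\theta_i$ and the sign of $\inner{\vecu_j-\vecu_j^0}{\vecx_i}$ are deterministic functions of $\vecu_j$, which sits \emph{inside} the supremum and is chosen adversarially after $\vecxi$ is drawn. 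The reparametrized $\sigma_i=2\theta_i-1$ is therefore not "a fresh scalar Rademacher" and cannot be integrated out; likewise the sign of $\inner{\vecu_j-\vecu_j^0}{\vecx_i}$ varies with $i$ and depends on $\vecxi$ through the supremum, so it cannot be absorbed into the single vector $\vecv_j$ shared across all $i$, nor into $\vecxi_i$ by distributional symmetry. Without a legitimate mechanism here you fall back on $\sum_i\abs{\inner{\vecxi_i}{\vecv_j}}\beta_j\norm{\vecx_i}$, which, as you yourself observe, is off by a factor of order $\sqrt{m}$.

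The paper supplies the missing mechanism by a different route: an induction over the samples $t=1,\dots,m$ that peels off one data point at a time. At step $t$ the Rademacher variable $\xi_{t1}$ is used to symmetrize over \emph{two independent copies} $(\vecV,\vecU)$ and $(\vecV',\vecU')$ of the parameters; the loss difference at $\vecx_t$ is bounded via the $\tfrac{\sqrt{2}}{\gamma}$-Lipschitzness of $\ell_\gamma$ by $\tfrac{\sqrt{2}}{\gamma}\norm{\vecV[\vecU\vecx_t]_+-\vecV'[\vecU'\vecx_t]_+}_2$, and the triangle inequality splits this per hidden unit into $(\rho_{tj}+\beta_j\norm{\vecx_t}_2)\norm{\vecv_j-\vecv'_j}_2+\alpha_j\abs{\inner{\vecu_j-\vecu'_j}{\vecx_t}}$. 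The crucial gain is that the coefficients multiplying the differences are now deterministic bounds rather than functions of the optimization variables, and the differences are antisymmetric under swapping the two copies; this lets Lemma~\ref{lem:contraction} and the two-copy symmetry replace the norm and the absolute value by signed linear forms in fresh Rademacher variables, which fold back into a single supremum and accumulate into the two decoupled terms of the lemma. That two-copy symmetrization, executed sample by sample, is precisely the ingredient your proposal lacks, so as written the argument does not establish the lemma.
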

\begin{proof}
Let $\rho_{ij} =\abs{\inner{\vecu^0_j}{\vecx_i}}$. We prove the lemma by showing the following statement by induction on $t$:
\begin{align*}
& m\calR_\calS(\ell_\gamma \circ \calF_\calW)\\
& \leq \E_{\vecxi_i \in \{\pm 1\}^{c}, i \in [m]} \left[\sup_{(\vecV,\vecU)\in \calW} \frac{2}{\gamma}\sum_{i=1}^{t-1} \sum_{j=1}^h \left( (\rho_{ij} +\beta_j\norm{\vecx_i}_2)\inner{\vecxi_i}{\vecv_j} + \xi_{i1} \alpha_j\inner{\vecu_j}{\vecx_i} \right) \right. \\ & \qquad \qquad  \qquad \qquad \left. +\sum_{i=t}^m \xi_{i1}\ell_\gamma(\vecV[\vecU\vecx_i]_+,y_i)\right]\\
& =\E_{\vecxi_i \in \{\pm 1\}^{c}, i \in [m]} \left[\sup_{(\vecV,\vecU)\in \calW} \xi_{t1}\ell_\gamma(\vecV[\vecU\vecx_t]_+,y_t) + \phi_{\vecV,\vecU}\right],
\end{align*}
where for simplicity of the notation, we let $\phi_{\vecV,\vecU}=\frac{2}{\gamma}\sum_{i=1}^{t-1} \sum_{j=1}^h \left( (\rho_{ij} +\beta_j\norm{\vecx_i}_2)\inner{\vecxi_i}{\vecv_j} + \xi_{i1} \alpha_j\inner{\vecu_j}{\vecx_i} \right) +\sum_{i=t+1}^m \xi_{i1}\ell_\gamma(\vecV[\vecU\vecx_i]_+,y_i)$. 

The above statement holds trivially for the base case of $t=1$ by the definition of the Rademacher complexity \eqref{eq:rademacher}. We now assume that it is true for any $t'\leq t$ and prove it is true for $t'=t+1$.
\begin{align}
&m\calR_\calS(\ell_\gamma \circ \calF_\calW) \leq  \E_{\vecxi_i \in \{\pm 1\}^{c}, i \in [m]} \left[\sup_{(\vecV,\vecU)\in \calW} \xi_{t1}\ell_\gamma(\vecV[\vecU\vecx_t]_+,y_t) + \phi_{\vecV,\vecU}\right] \nonumber \\
&= \frac{1}{2} \E_{\vecxi_i \in \{\pm 1\}^{c}, i \in [m]} \left[\sup_{(\vecV,\vecU), (\vecV',\vecU')\in \calW} \ell_\gamma(\vecV[\vecU\vecx_t]_+,y_t) - \ell_\gamma(\vecV'[\vecU'\vecx_t]_+,y_t) +\phi_{\vecV,\vecU} +\phi_{\vecV',\vecU'}\right]  \nonumber \\
&\leq \frac{1}{2} \E_{\vecxi_i \in \{\pm 1\}^{c}, i \in [m]} \left[\sup_{(\vecV,\vecU), (\vecV',\vecU')\in \calW} \frac{\sqrt{2}}{\gamma}\norm{\vecV[\vecU\vecx_t]_+ - \vecV'[\vecU'\vecx_t]_+}_2+\phi_{\vecV,\vecU} +\phi_{\vecV',\vecU'}\right]. \label{eq:proof_new1} 
\end{align}
The last inequality follows from the $\frac{\sqrt{2}}{\gamma}$ Lipschitzness of the ramp loss. The ramp loss is $1/\gamma$ Lipschitz with respect to each dimension but since the loss at each point only depends on score of the correct labels and the maximum score among other labels, it is $\frac{\sqrt{2}}{\gamma}$-Lipschitz.

Using the triangle inequality we can bound the first term in the above bound as follows.
\begin{align}
& \norm{\vecV[\vecU\vecx_t]_+ - \vecV'[\vecU'\vecx_t]_+}_2 \leq  \norm{\vecV[\vecU\vecx_t]_+ -\vecV'[\vecU\vecx_t]_+ + \vecV'[\vecU\vecx_t]_+- \vecV'[\vecU'\vecx_t]_+}_2 \nonumber \\
& \qquad \qquad \leq  \norm{\vecV[\vecU\vecx_t]_+ -\vecV'[\vecU\vecx_t]_+}_2 +\norm{\vecV'[\vecU\vecx_t]_+- \vecV'[\vecU'\vecx_t]_+}_2 \nonumber \\
& \qquad \qquad \leq  \sum_{j=1}^h\norm{[\inner{\vecu_j}{\vecx_t}]_+ \vecv_j - [\inner{\vecu_j}{\vecx_t}]_+ \vecv'_j}_2 +\norm{[\inner{\vecu_j}{\vecx_t}]_+ \vecv'_j- [\inner{\vecu'_j}{\vecx_t}]_+ \vecv'_j}_2 \nonumber \\
& \qquad \qquad \leq \sum_{j=1}^h\abs{[\inner{\vecu_j}{\vecx_t}]_+}\norm{\vecv_j - \vecv'_j}_2 +\abs{[\inner{\vecu_j}{\vecx_t}]_+ - [\inner{\vecu'_j}{\vecx_t}]_+ }\norm{\vecv'_j}_2. \label{eq:proof_new2}
\end{align}

We will now add and subtract the initialization $\vecU^0$ terms.
\begin{align}
& \norm{\vecV[\vecU\vecx_t]_+ - \vecV'[\vecU'\vecx_t]_+}_2 \nonumber \\ & \qquad \qquad \leq \sum_{j=1}^h\abs{\inner{\vecu_j+\vecu^0_j-\vecu^0_j}{\vecx_t}}\norm{\vecv_j - \vecv'_j}_2  +\abs{[\inner{\vecu_j}{\vecx_t}]_+ - [\inner{\vecu'_j}{\vecx_t}]_+ }\norm{\vecv'_j}_2 \nonumber \\
& \qquad \qquad \leq  \sum_{j=1}^h(\beta_j\norm{\vecx_t}_2+\rho_{ij})\norm{\vecv_j - \vecv'_j}_2 +\alpha_j\abs{\inner{\vecu_j}{\vecx_t}-\inner{\vecu'_j}{\vecx_t}} . \label{eq:proof_new3}
\end{align}
From equations \eqref{eq:proof_new1}, \eqref{eq:proof_new2}, \eqref{eq:proof_new3} and Lemma \ref{lem:contraction} we get, 
\begin{multline}
m\calR_\calS(\ell_\gamma \circ \calF_\calW)  \\ 
\leq \E_{\vecxi_i \in \{\pm 1\}^{c}, i \in [m]} \bigg[\sup_{(\vecV,\vecU),(\vecV',\vecU')\in \calW}\frac{2}{\gamma} \sum_{j=1}^h(\beta_j\norm{\vecx_y}_2+\rho_{tj})\inner{\vecxi_t}{\vecv_j}+\xi_{t1}\alpha_j\inner{\vecu_j}{\vecx_t}+\phi_{\vecV,\vecU}\bigg].\label{eq:proof_lemma1}
\end{multline}
This completes the induction proof. 

Hence the induction step at $t=m$ gives us:
\begin{align*}
& m\calR_\calS(\ell_\gamma \circ \calF_\calW) \leq  \E_{\vecxi_i \in \{\pm 1\}^{c}, i \in [m]} \left[\sup_{\norm{\vecu_k-\vecu^0_k}_2 \leq \beta_k \atop{\norm{\vecv_k}_2\leq \alpha_k, k\in[h]}} \frac{2}{\gamma}\sum_{i=1}^{m} \sum_{j=1}^h(\rho_{ij} +\beta_j\norm{\vecx_i}_2)\inner{\vecxi_i}{\vecv_j} + \xi_{i1} \alpha_j\inner{\vecu_j}{\vecx_i}\right]\\
&=\frac{2}{\gamma}\sum_{j=1}^h  \E_{\vecxi_i \in \{\pm 1\}^{c}, i \in [m]} \left[\sup_{ \norm{\vecv_j}_2\leq \alpha_j} \sum_{i=1}^{m} (\rho_{ij} +\beta_j\norm{\vecx_i}_2)\inner{\vecxi_i}{\vecv_j}\right] \\ & \qquad \qquad + \frac{2}{\gamma}\sum_{j=1}^h\E_{\vecxi \in \{\pm 1\}^m} \left[\sup_{\norm{\vecu_j-\vecu^0_j}_2 \leq \beta_j }\sum_{i=1}^{m} \xi_{i} \alpha_j\inner{\vecu_j}{\vecx_i}\right].
\end{align*}
\end{proof}

\begin{proof}[Proof of Theorem~\ref{thm:rademacher}]
Using Lemma~\ref{lem:linear}, we can bound the the right hand side of the upper bound on the Rademacher complexity given in Lemma~\ref{lem:decomposition}:
\begin{align*}
m\calR_\calS(\ell_\gamma \circ \calF_\calW) &\leq  \frac{2}{\gamma}\sum_{j=1}^h  \E_{\vecxi_i \in \{\pm 1\}^{c}, i \in [m]} \left[\sup_{ \norm{\vecv_j}_2\leq \alpha_j} \sum_{i=1}^{m} (\rho_{ij} +\beta_j\norm{\vecx_i}_2)\inner{\vecxi_i}{\vecv_j}\right] \\ &+ \frac{2}{\gamma}\sum_{j=1}^h\E_{\vecxi \in \{\pm 1\}^m} \left[\sup_{\norm{\vecu_j-\vecu^0_j}_2 \leq \beta_j }\sum_{i=1}^{m} \xi_{i} \alpha_j\inner{\vecu_j}{\vecx_i}\right]\\
&\leq\frac{2\sqrt{2c}}{\gamma}\sum_{j=1}^h \alpha_j \left(\beta_j\norm{\vecX}_F + \norm{\vecu_j^0\vecX}_2\right)+ \frac{2}{\gamma} \sum_{j=1}^h \alpha_j \beta_j\norm{\vecX}_F\\
&\leq\frac{2\sqrt{2c}+2}{\gamma}\sum_{j=1}^h \alpha_j \left(\beta_j\norm{\vecX}_F + \norm{\vecu_j^0\vecX}_2\right).
\end{align*}
\end{proof}

\subsection{Proof of Theorems \ref{thm:generalization} and \ref{thm:generalization_lp}}

We start by the following covering lemma which allows us to prove the generalization bound in Theorem \ref{thm:generalization_lp} without assuming the knowledge of the norms of the network parameters.  The following lemma shows how to cover an $\ell_p$ ball with a set that dominates the elements entry-wise, and bounds the size of a one such cover. 

\begin{lemma}[$\ell_p$ covering lemma]\label{lem:lp-cover}
	Given any $\eps,D,\beta>0$, $p\geq2$, consider the set $S^D_{p,\beta}=\{\vecx\in\R^D\mid \norm{\vecx}_p \leq \beta\}$. Then there exist $N$ sets $\{T_i\}_{i=1}^N$ of the form $T_i=\{\vecx\in\R^D\mid \abs{x_j}\leq \alpha^i_j,\ \forall j\ \in [D]\}$ such that $S^D_{p,\beta}\subseteq \bigcup_{i=1}^N T_i$ and $\norm{\balpha^i}_2\leq D^{1/p-1/2}\beta (1+\eps),\forall i \in [N]$ where $N =\binom{K+D-1}{ D-1}$ and
	\begin{equation*}
	K =\left\lceil \frac{D}{(1+\eps)^p-1}\right\rceil.
	\end{equation*}
\end{lemma}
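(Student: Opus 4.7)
The plan is to build the cover by quantizing the $p$-th powers of the coordinates on a uniform grid of step size $\beta^p/K$. Concretely, for any $x \in S^D_{p,\beta}$ I will set $m_j = \lfloor K|x_j|^p/\beta^p\rfloor$ and $n_j = m_j + 1 \geq 1$. Since $\sum_j |x_j|^p \leq \beta^p$, this gives $\sum_j m_j \leq K$, hence $\sum_j n_j \leq K + D$. Any slack can be absorbed by increasing some $n_j$'s (which only enlarges the resulting box), so the cover can be indexed by all tuples $(n_1,\ldots,n_D)$ of \emph{positive} integers with $\sum_j n_j = K + D$. To each such tuple I associate the box $T = \{y \in \R^D \mid |y_j| \leq \alpha_j \text{ for all } j\}$ with $\alpha_j = (n_j/K)^{1/p}\beta$. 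By construction, if $(n_1,\ldots,n_D)$ is the tuple coming from $x$ then $|x_j|^p \leq n_j\beta^p/K = \alpha_j^p$, so $x \in T$, proving the covering property.

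Next I will bound $\|\alpha\|_2$ uniformly over such tuples. Since $p\geq 2$, the map $u \mapsto u^{2/p}$ is concave on $[0,\infty)$, so Jensen's inequality gives $\tfrac{1}{D}\sum_j n_j^{2/p} \leq \bigl(\tfrac{1}{D}\sum_j n_j\bigr)^{2/p}$. Therefore
\begin{equation*}
\|\alpha\|_2^2 \;=\; \beta^2 K^{-2/p}\sum_{j=1}^D n_j^{2/p} \;\leq\; \beta^2 D^{1-2/p}\bigl(1 + D/K\bigr)^{2/p},
\end{equation*}
so $\|\alpha\|_2 \leq \beta D^{1/2-1/p}(1+D/K)^{1/p}$. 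The choice $K = \lceil D/((1+\epsilon)^p-1)\rceil$ ensures $1 + D/K \leq (1+\epsilon)^p$, whence $\|\alpha\|_2 \leq (1+\epsilon)\beta D^{1/2-1/p}$ as required.

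The count of cover sets is a stars-and-bars calculation: the number of $D$-tuples of positive integers summing to $K+D$ is $\binom{K+D-1}{D-1}$, which matches the stated $N$.

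The main subtlety is the parameterization, namely to quantize in the $p$-th power domain and to shift by one so that $n_j \geq 1$. This shift does two things simultaneously: it converts the rounding into an overestimate (so that the original $x$ really lies inside the associated box), and it produces exactly the stars-and-bars count $\binom{K+D-1}{D-1}$ with positive integers. Once the parameterization is in place, the analytic estimate on $\|\alpha\|_2$ follows immediately from concavity of $u\mapsto u^{2/p}$, which is the only place the hypothesis $p\geq 2$ is used.
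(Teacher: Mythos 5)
Your proposal is correct and follows essentially the same route as the paper's proof: both quantize the $p$-th powers of the coordinates on a grid of step $\beta^p/K$, round up so the box contains $\vecx$, pass from $\norm{\balpha}_p$ to $\norm{\balpha}_2$ via $\norm{\balpha}_2\leq D^{1/2-1/p}\norm{\balpha}_p$ (your Jensen step is exactly this inequality), and count by stars and bars. If anything, your normalization of the tuples to positive integers summing to $K+D$ reproduces the stated count $\binom{K+D-1}{D-1}$ more cleanly than the paper's own final line.
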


\begin{proof}
	We prove the lemma by construction. Consider the set $Q=\left\{\alpha \in \R^D \mid \forall_i \alpha_i^p \in \left\{j\beta^p/K\right\}_{j=1}^K, \norm{\alpha}_p^p =\beta^p(1+D/K)\right\}$. For any $\vecx\in S_{p,\beta}$, consider $\vecAlpha'$ such that for any $i\in [D]$, $\alpha'_i=\left(\left\lceil\frac{\abs{x_i^p}K}{\beta^p}\right\rceil \frac{\beta^p}{K}\right)^{1/p}$. It is clear that $\abs{x_i}\leq \alpha'_i$. Moreover, we have:
	\begin{align*}
	\norm{\vecAlpha'}_p^p &=\sum_{i=1}^D \left\lceil\frac{\abs{x_i^p}K}{\beta^p}\right\rceil \frac{\beta^p}{K}\\
	&\leq \sum_{i=1}^D \left(\frac{\abs{x_i^p}K}{\beta^p}+1\right)\frac{\beta^p}{K}\\
	&= \norm{\vecx}_p^p + \frac{D\beta^p}{K}\\
	&\leq \beta^p\left(1 + \frac{D}{K}\right)\\
	\end{align*}
	Therefore, we have $\alpha'\in Q$. Furthermore for any $\alpha\in Q$, we have:
	\begin{align*}
	\norm{\vecAlpha}_2 &\leq D^{1/2-1/p}\norm{\vecAlpha'}_p\\
	&\leq \beta D^{1/2-1/p}\left(1+(1+\eps)^p-1\right)^{1/p}=\beta D^{1/2-1/p}(1+\eps)
	\end{align*}
	Therefore, to complete the proof, we only need to bound the size of the set $Q$. The size of the set $Q$ is equal to the number of unique solutions for the problem $\sum_{i=1}^D z_i=K+D-1$ for non-zero integer variables $z_i$, which is $\binom{K+D-2}{D-1}$.
\end{proof}

\begin{lemma}\label{lem:pre_lp}
	For any $h,p\geq 2$, $d,c,\gamma,\mu>0$,$\delta\in(0,1)$ and $\vecU^0\in \R^{h\times d}$, with probability $1-\delta$ over the choice of the training set $\calS=\{\vecx_i\}_{i=1}^m \subset \R^d$, for any function $f(\vecx)=\vecV[\vecU \vecx]_{+}$, such that $\vecV\in \R^{c\times h}$,$\vecU\in\R^{h\times d}$,$\|\vecV^\top\|_{p,2}\leq C_1$, $\|\vecU-\vecU^0\|_{p,2}\leq C_2$, the generalization error is bounded as follows:
	\begin{small} \begin{align*}
		L_{0}(f) \leq \hatL_\gamma(f) &+  \frac{2(\sqrt{2c}+1)(\mu+1)^{\frac{2}{p}} h^{\frac{1}{2}-\frac{1}{p}}C_1\left(h^{\frac{1}{2}-\frac{1}{p}}C_2\norm{\vecX}_F+\norm{\vecU^0\vecX}_F \right)}{\gamma \sqrt{m} }\\
		&+3\sqrt{\frac{2\ln N_{p,h} + \ln(2/\delta)}{2m}},\end{align*} \end{small} 
	
	where  $N_{p,h} = \binom{\left\lceil h/\mu\right\rceil+h-2}{h-1}$ and $\norm{.}_{p,2}$ is the $\ell_p$ norm of the column $\ell_2$ norms. 
\end{lemma}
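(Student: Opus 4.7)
The plan is to combine Theorem~\ref{thm:rademacher}, which bounds the Rademacher complexity of $\calF_\calW$ for a \emph{fixed} choice of per-unit budgets $\balpha,\bbeta$, with Lemma~\ref{lem:lp-cover}, which covers an $\ell_p$ ball by finitely many entrywise-dominating vectors, and then take a union bound over the resulting product cover. The hypothesis $\|\vecV^\top\|_{p,2}\le C_1$ is precisely the statement that the vector of column norms $(\|\vecv_j\|_2)_{j=1}^h$ lies in the $\ell_p$ ball of radius $C_1$ in $\R^h$, and analogously $(\|\vecu_j-\vecu_j^0\|_2)_{j=1}^h$ lies in an $\ell_p$ ball of radius $C_2$; this is what makes the covering lemma directly applicable.

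More concretely, I would apply Lemma~\ref{lem:lp-cover} with $D=h$ twice, once with $\beta=C_1$ and once with $\beta=C_2$, choosing the slack parameter $\eps$ so that $(1+\eps)^p-1=\mu$, i.e.\ $1+\eps=(1+\mu)^{1/p}$. This produces two covers, each of size $N_{p,h}=\binom{\lceil h/\mu\rceil+h-2}{h-1}$, whose elements $\balpha^i$ and $\bbeta^j$ satisfy $\|\balpha^i\|_2\le h^{\frac{1}{2}-\frac{1}{p}}C_1(1+\mu)^{1/p}$ and $\|\bbeta^j\|_2\le h^{\frac{1}{2}-\frac{1}{p}}C_2(1+\mu)^{1/p}$. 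For each of the $N_{p,h}^2$ pairs $(\balpha^i,\bbeta^j)$ I form the class $\calF_{\calW^{ij}}$ from \eqref{eq:param}--\eqref{eq:class} and combine Theorem~\ref{thm:rademacher} with the standard Rademacher generalization inequality \eqref{eq:generalization} at confidence level $\delta/N_{p,h}^2$; a union bound over all pairs then promotes this to a single inequality valid simultaneously on every $\calF_{\calW^{ij}}$ with confidence $1-\delta$, at the cost of replacing $\ln(2/\delta)$ by $\ln(2/\delta)+2\ln N_{p,h}$, which is exactly the additive term in the lemma.

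To finish, for any $(\vecV,\vecU)$ meeting the norm hypotheses I invoke the covering property: there exist indices $i^\star,j^\star$ with $\|\vecv_k\|_2\le\alpha^{i^\star}_k$ and $\|\vecu_k-\vecu_k^0\|_2\le\beta^{j^\star}_k$ for every $k\in[h]$, hence $(\vecV,\vecU)\in\calW^{i^\star j^\star}$, and the union-bounded inequality applies. Substituting the norm bounds on $\|\balpha^{i^\star}\|_2,\|\bbeta^{j^\star}\|_2$ into Theorem~\ref{thm:rademacher} yields an $(1+\mu)^{2/p}$ factor in front of the $C_1C_2\|\vecX\|_F$ term and an $(1+\mu)^{1/p}$ factor in front of the $C_1\|\vecU^0\vecX\|_F$ term; using $(1+\mu)^{1/p}\le(1+\mu)^{2/p}$ for $\mu\ge 0$ unifies these into the single prefactor in the statement. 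The plan is essentially mechanical; the only nontrivial choice is picking $\eps$ so that the covering-lemma granularity $K=\lceil h/((1+\eps)^p-1)\rceil$ reduces to $\lceil h/\mu\rceil$ while the $\ell_2$ inflation $(1+\eps)$ matches $(1+\mu)^{1/p}$, which is why the statement is naturally parametrized by $\mu$ rather than by $\eps$ directly.
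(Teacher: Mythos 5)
Your proposal is correct and follows essentially the same route as the paper: apply Theorem~\ref{thm:rademacher} for fixed $\balpha,\bbeta$, invoke Lemma~\ref{lem:lp-cover} with $\eps=(1+\mu)^{1/p}-1$ to obtain two entrywise-dominating covers of size $N_{p,h}$ each, and union bound over the $N_{p,h}^2$ pairs, with the $(1+\mu)^{2/p}$ inflation arising exactly as you describe. In fact your write-up spells out the final union-bound and substitution steps more explicitly than the paper's own (rather terse) proof does.
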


\begin{proof}
	The proof of this lemma follows from using the result of Theorem \ref{thm:rademacher} and taking a union bound to cover all the possible values of $\{\vecV\mid \norm{\vecV}_{p,2}\le C_1\}$ and $\vecU = \{\vecU\mid \norm{\vecU-\vecU^0}_{p,2}\le C_2\}$. 
	
	Note that $\forall \vecx\in\mathR^h$ and $p\ge 2$, we have $\norm{\vecx}_2\le h^{\frac{1}{2}-\frac{1}{p}}\norm{\vecx}_p$. Recall the result of Theorem~\ref{thm:rademacher}, given any fixed $\balpha,\bbeta$, we have 
	
	\begin{equation}\begin{split}
	\calR_\calS(\ell_\gamma \circ \calF_\calW) &\leq \frac{2\sqrt{2c}+2}{\gamma m} \norm{\bm{\alpha}}_2 \left(\norm{\bm{\beta}}_2\norm{\vecX}_F + \norm{\vecU^0\vecX}_F\right)\\
	& \leq \frac{2\sqrt{2c}+2}{\gamma m} h^{\frac{1}{2}-\frac{1}{p}}\norm{\bm{\alpha}}_p \left(h^{\frac{1}{2}-\frac{1}{p}}\norm{\bm{\beta}}_p\norm{\vecX}_F + \norm{\vecU^0\vecX}_F\right),
	\end{split}\end{equation}	
	
	By Lemma~\ref{lem:lp-cover}, picking $\eps=((1+\mu)^{1/p}-1)$, we can find a set of vectors, $\{\balpha^i\}_{i=1}^{N_{p,h}}$,  where $K= \left\lceil \frac{h}{\mu}\right\rceil, N_{p,h} = \binom{K+h-2}{h-1}$ such that $\forall \vecx, \norm{\vecx}_p\leq C_1$, $\exists 1\le i\le N_{p,h}$, $x_j\leq \alpha^i_j,\ \forall j \in[h]$.  Similarly, picking $\eps = ((1+\mu)^{1/p}-1)$, we can find a set of vectors, $\{\bbeta^i\}_{i=1}^{N_{p,h}}$,  where $K= \left\lceil \frac{h}{\mu}\right\rceil, N_{p,h} = \binom{K+h-2}{h-1}$ such that $\forall \vecx, \norm{\vecx}_p\leq C_2$, $\exists 1\le i\le N_{p,h}$, $x_j\leq \beta^i_j,\ \forall j \in[h]$.
\end{proof}

\begin{lemma}\label{lem:generalization_mu}
	For any $h,p\geq 2$, $c,d,\gamma,\mu>0$, $\delta\in(0,1)$ and $\vecU^0\in \R^{h\times d}$, with probability $1-\delta$ over the choice of the training set $\calS=\{\vecx_i\}_{i=1}^m\subset \R^d$, for any function $f(\vecx)=\vecV[\vecU \vecx]_{+}$ such that $\vecV\in \R^{c\times h}$ and $\vecU\in\R^{h\times d}$, the generalization error is bounded as follows:
	\begin{small}
	\begin{equation}\begin{split}\label{eq:generalization_mu}
	L_{0}(f) \leq \hatL_\gamma(f) &+  \frac{4(\sqrt{2c}+1)(\mu+1)^{\frac{2}{p}} (h^{\frac{1}{2}-\frac{1}{p}}\|\vecV^\top\|_{p,2}+1)(h^{\frac{1}{2}-\frac{1}{p}}\|\vecU-\vecU_0\|_{p,2}\norm{\vecX}_F+\norm{\vecU^0\vecX}_F+1)}{\gamma \sqrt{m} }\\
	&+3\sqrt{\frac{\ln N_{p,h} + \ln(\gamma\sqrt{m}/\delta)}{m}},
	\end{split}\end{equation}
	\end{small} where  $N_{p,h} = \binom{\left\lceil h/\mu\right\rceil+h-2}{h-1}$ and $\norm{.}_{p,2}$ is the $\ell_p$ norm of the column $\ell_2$ norms. 
\end{lemma}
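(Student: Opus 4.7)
}

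The plan is to lift the conditional bound of Lemma~\ref{lem:pre_lp}, which requires fixed upper bounds $C_1, C_2$ on $\|\vecV^\top\|_{p,2}$ and $\|\vecU-\vecU^0\|_{p,2}$, to a statement that holds uniformly over all two-layer ReLU networks. This is done by discretizing the pairs $(C_1, C_2)$ on a finite grid, applying Lemma~\ref{lem:pre_lp} with rescaled failure probability at each grid point, taking a union bound, and then dealing with the tail region where the bound becomes vacuous.

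First, I would build the grid so that the ``rounding loss'' matches the extra $+1$ terms appearing in the statement. Concretely, I would quantize on the integers: for each $j_1, j_2 \in \{1, 2, \ldots, J\}$ with $J = \lceil \gamma\sqrt{m}\rceil$, set
\[
C_1^{(j_1)} = j_1 \, h^{1/p - 1/2}, \qquad C_2^{(j_2)} = j_2 \, h^{1/p - 1/2} / \|\vecX\|_F .
\]
Given a network $f = \vecV[\vecU\cdot]_+$, pick
\[
j_1 = \lceil h^{1/2 - 1/p}\|\vecV^\top\|_{p,2}\rceil, \qquad j_2 = \lceil h^{1/2 - 1/p}\|\vecU - \vecU^0\|_{p,2}\|\vecX\|_F\rceil .
\]
Then $(C_1^{(j_1)}, C_2^{(j_2)})$ dominates the norms of $\vecV,\vecU$, and by construction
\[
h^{1/2-1/p}C_1^{(j_1)} \le h^{1/2-1/p}\|\vecV^\top\|_{p,2} + 1,
\qquad
h^{1/2-1/p}C_2^{(j_2)}\|\vecX\|_F \le h^{1/2-1/p}\|\vecU-\vecU^0\|_{p,2}\|\vecX\|_F + 1 ,
\]
which is exactly the $+1$ slack present in~\eqref{eq:generalization_mu}.

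Next, I would apply Lemma~\ref{lem:pre_lp} at each of the $J^2$ grid points with failure probability $\delta/J^2$, and union-bound over the grid. Substituting $C_1^{(j_1)}, C_2^{(j_2)}$ into that lemma's bound, and using the two inequalities above, converts the first term exactly into
\[
\frac{2(\sqrt{2c}+1)(\mu+1)^{2/p}\bigl(h^{1/2-1/p}\|\vecV^\top\|_{p,2}+1\bigr)\bigl(h^{1/2-1/p}\|\vecU-\vecU^0\|_{p,2}\|\vecX\|_F + \|\vecU^0\vecX\|_F + 1\bigr)}{\gamma\sqrt{m}},
\]
which after doubling to absorb the bookkeeping of rounding yields the factor $4$ in~\eqref{eq:generalization_mu}. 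Meanwhile, the confidence term becomes
\[
3\sqrt{\frac{2\ln N_{p,h} + \ln(2J^2/\delta)}{2m}} \;=\; 3\sqrt{\frac{\ln N_{p,h} + \ln(\gamma\sqrt{m}/\delta)}{m}}
\]
up to an absolute constant, using $J \le \gamma\sqrt{m} + 1$.

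Finally, I would handle the tail case in which the actual norms exceed the grid, i.e.\ $h^{1/2-1/p}\|\vecV^\top\|_{p,2} > \gamma\sqrt{m}$ or $h^{1/2-1/p}\|\vecU-\vecU^0\|_{p,2}\|\vecX\|_F > \gamma\sqrt{m}$. In either case the corresponding factor in the first term of~\eqref{eq:generalization_mu}, divided by $\gamma\sqrt{m}$, already exceeds $1$, while the losses $L_0(f), \hat L_\gamma(f) \in [0,1]$; hence the stated inequality is trivially true and no covering is needed. The only real obstacle in the argument is the careful choice of grid spacing so that (i) the multiplicative rounding error in $C_1, C_2$ is absorbed into the additive $+1$ slack inside the two factors, and (ii) $\log J = O(\log(\gamma\sqrt{m}))$ so that the union-bound cost matches the $\ln(\gamma\sqrt{m}/\delta)$ term; everything else is a direct substitution into Lemma~\ref{lem:pre_lp}.
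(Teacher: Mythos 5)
Your proposal is correct and follows essentially the same route as the paper's proof: discretize $(C_1,C_2)$ on an integer grid scaled by $h^{1/p-1/2}$ (and $1/\norm{\vecX}_F$ for $C_2$), union-bound Lemma~\ref{lem:pre_lp} over the $O((\gamma\sqrt{m})^2)$ grid points, absorb the rounding into the additive $+1$ slack, and dispose of the out-of-range norms by noting the bound is then vacuously true. The only difference is cosmetic: the paper takes the grid up to $\lceil\gamma\sqrt{m}/4\rceil$ so the $\ln(\gamma\sqrt{m}/\delta)$ term comes out exactly, whereas your $J=\lceil\gamma\sqrt{m}\rceil$ leaves a harmless absolute-constant slack.
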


\begin{proof}This lemma can be proved  by directly applying union bound on Lemma~\ref{lem:pre_lp} with for every $C_1 \in\left\{\frac{i}{h^{1/2-1/p}}\mid i\in \left[\ceil{\frac{\gamma \sqrt{m}}{4}}\right]\right\}$ and every $C_2\in\left\{\frac{i}{h^{1/2-1/p} \norm{\vecX}_F}\mid i\in \left[\ceil{\frac{\gamma \sqrt{m}}{4}}\right]\right\}$. For $\|\vecV^\top\|_{p,2}\le \frac{1}{h^{1/2-1/p}}$, we can use the bound where $C_1=1$, and the additional constant $1$ in Eq.~\ref{eq:generalization_mu} will cover that. The same is  true for the case of  $\|\vecU\|_{p,2}\le \frac{i}{h^{1/2-1/p} \norm{\vecX}_F}$. When any of $h^{1/2-1/p}\|\vecV^\top\|_{p,2}$ and $h^{1/2-1/p} \norm{\vecX}_F\|\vecU\|_{p,2}$ is larger than $\ceil{\frac{\gamma\sqrt{m}}{4}}$, the second term in Eq.~\ref{eq:generalization_mu} is larger than 1 thus holds trivially. For the rest of the case, there exists $(C_1,C_2)$ such that $h^{1/2-1/p}C_1\le h^{1/2-1/p}\|\vecV^\top\|_{p,2}+1 $ and $h^{1/2-1/p} C_2\le h^{1/2-1/p} \norm{\vecX}_F\norm{\vecX}_F\|\vecU\|_{p,2}+1$. Finally, we have $\frac{\gamma\sqrt{m}}{4}\geq 1$ otherwise the second term in Eq.~\ref{eq:generalization_mu} is larger than 1. Therefore, $\ceil{\frac{\gamma\sqrt{m}}{4}}\leq \frac{\gamma\sqrt{m}}{4}+1\leq \frac{\gamma\sqrt{m}}{2}$.
\end{proof}

We next use the general results in Lemma~\ref{lem:generalization_mu} to give specific results for the case $p=2$.

\begin{lemma}\label{lem:generalization}
	For any $h\geq 2$, $c,d,\gamma>0$, $\delta\in(0,1)$ and $\vecU^0\in \R^{h\times d}$, with probability $1-\delta$ over the choice of the training set $\calS=\{\vecx_i\}_{i=1}^m\subset \R^d$, for any function $f(\vecx)=\vecV[\vecU \vecx]_{+}$ such that $\vecV\in \R^{c\times h}$ and $\vecU\in\R^{h\times d}$, the generalization error is bounded as follows:
	\begin{small}
	\begin{equation}\begin{split}\label{eq:generalization_lp}
	L_{0}(f) \leq \hatL_\gamma(f) &+  \frac{3\sqrt{2}(\sqrt{2c}+1) (\|\vecV^\top\|_{F}+1)(\|\vecU-\vecU_0\|_{F}\norm{\vecX}_F+\norm{\vecU^0\vecX}_F+1)}{\gamma \sqrt{m} }\\
	&+3\sqrt{\frac{5h + \ln(\gamma\sqrt{m}/\delta)}{m}},
	\end{split}\end{equation}
\end{small}
\end{lemma}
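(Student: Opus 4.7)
The plan is to obtain Lemma~\ref{lem:generalization} as a direct specialization of Lemma~\ref{lem:generalization_mu} to the case $p=2$. Three simplifications occur immediately when we substitute $p=2$: the exponent $h^{1/2-1/p}$ becomes $h^0=1$ and disappears; the mixed norm $\|\cdot\|_{p,2}$ with $p=2$ is exactly the Frobenius norm, so $\|\vecV^\top\|_{2,2}=\|\vecV\|_F$ and $\|\vecU-\vecU^0\|_{2,2}=\|\vecU-\vecU^0\|_F$; and the combinatorial factor $(\mu+1)^{2/p}$ reduces to $\mu+1$. After these substitutions, the multiplicative part of Lemma~\ref{lem:generalization_mu} already has the right shape $(\|\vecV\|_F+1)(\|\vecU-\vecU^0\|_F\|\vecX\|_F+\|\vecU^0\vecX\|_F+1)/(\gamma\sqrt{m})$; only the constant differs and is controlled by the choice of $\mu$.

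Next, I would bound the combinatorial term $\ln N_{2,h}=\ln\binom{\lceil h/\mu\rceil+h-2}{h-1}$. Using the standard estimate $\binom{n}{k}\le(en/k)^k$ with $n=\lceil h/\mu\rceil+h-2$ and $k=h-1$, one gets
\begin{equation*}
\ln N_{2,h}\le (h-1)\ln\!\left(e\left(1+\tfrac{1}{\mu}\right)\right)+O(1),
\end{equation*}
which is linear in $h$ with a coefficient depending logarithmically on $1/\mu$. This is what yields the additive $O(h/m)$ under the square root, replacing the $\ln(\gamma\sqrt{m}\cdot h)$ dependence that a more naive cover would give.

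The only genuine work is balancing the two contributions by choosing $\mu$. The multiplicative coefficient scales as $4(\mu+1)$, while the additive coefficient in front of $h$ scales as $\ln(1+1/\mu)$. I would pick $\mu$ to be a small absolute constant, which makes $(\mu+1)$ a modest number and keeps $\ln(1+1/\mu)$ bounded, so that $\ln N_{2,h}+\ln(\gamma\sqrt m/\delta)\le 5h+\ln(\gamma\sqrt m/\delta)$. Substituting this choice into Lemma~\ref{lem:generalization_mu} and consolidating the constants into the stated $3\sqrt{2}(\sqrt{2c}+1)$ factor finishes the argument.

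The main obstacle, if any, is purely bookkeeping: tracking the constants through the substitution $p=2$ and verifying that a single constant $\mu$ simultaneously yields the advertised $3\sqrt{2}$ and $5h$. There is no new analytic idea beyond invoking Lemma~\ref{lem:generalization_mu} and using the Frobenius-norm identity $\|\cdot\|_{2,2}=\|\cdot\|_F$; in particular no new covering, Rademacher, or concentration step is needed.
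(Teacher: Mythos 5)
Your proposal matches the paper's proof essentially verbatim: both specialize Lemma~\ref{lem:generalization_mu} to $p=2$, use $\|\cdot\|_{2,2}=\|\cdot\|_F$, bound $\ln N_{2,h}$ via $\binom{n}{k}\le(en/k)^k$, and choose a small constant $\mu$ (the paper takes $\mu=\tfrac{3\sqrt{2}}{4}-1$, so that $4(\mu+1)=3\sqrt{2}$ and $\ln N_{2,h}\le h\ln(e+2e/\mu)\le 5h$). The bookkeeping you defer does check out with that choice, so the argument is complete and identical in approach.
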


\begin{proof} To prove the lemma, we directly upper bound the generalization bound given in Lemma~\ref{lem:generalization_mu} for $p=2$ and $\mu=\frac{3\sqrt{2}}{4}-1$. For this choice of $\mu$ and $p$, we have $4(\mu+1)^{2/p} \leq 3\sqrt{2}$ and $\ln N_{p,h}$ is bounded as follows:
	\begin{align*}
	\ln N_{p,h} &= \ln \binom{\left\lceil h/\mu\right\rceil+h-2}{h-1} \leq \ln \left(\left[e\frac{\left\lceil h/\mu\right\rceil+h-2}{h-1}\right]^{h-1}\right)=(h-1)\ln\left(e + e\frac{\left\lceil h/\mu\right\rceil-1}{h-1}\right)\\
	&\leq (h-1)\ln\left(e + e\frac{h/\mu}{h-1}\right) \leq h\ln(e+2e/\mu) \leq 5h
\end{align*}
\end{proof}

\begin{proof}[Proof of Theorem \ref{thm:generalization}]
	The proof directly follows from Lemma~\ref{lem:generalization} and using $\tildeO$ notation to hide the constants and logarithmic factors.
\end{proof}

Next lemma states a generalization bound for any $p\geq2$, which is looser than \ref{lem:generalization} for $p=2$ due to extra constants and logarithmic factors.

\begin{lemma}\label{lem:generalization_lp}
	For any $h,p\geq 2$, $c,d,\gamma>0$, $\delta\in(0,1)$ and $\vecU^0\in \R^{h\times d}$, with probability $1-\delta$ over the choice of the training set $\calS=\{\vecx_i\}_{i=1}^m\subset \R^d$, for any function $f(\vecx)=\vecV[\vecU \vecx]_{+}$ such that $\vecV\in \R^{c\times h}$ and $\vecU\in\R^{h\times d}$, the generalization error is bounded as follows:
	\begin{equation}\begin{split}\label{eq:generalization_lp}
	L_{0}(f) \leq \hatL_\gamma(f) &+  \frac{4e^2(\sqrt{2c}+1) (h^{\frac{1}{2}-\frac{1}{p}}\|\vecV^\top\|_{p,2}+1)\left(h^{\frac{1}{2}-\frac{1}{p}}\|\vecU-\vecU_0\|_{p,2}\norm{\vecX}_F+\norm{\vecU^0\vecX}_F+1\right)}{\gamma \sqrt{m} }\\
	&+3\sqrt{\frac{\left\lceil e^{1-p}h-1\right\rceil\ln\left(eh\right)+ \ln(\gamma\sqrt{m}/\delta)}{m}},
	\end{split}\end{equation}
	$\norm{.}_{p,2}$ is the $\ell_p$ norm of the column $\ell_2$ norms. 
\end{lemma}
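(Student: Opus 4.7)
The plan is to obtain Lemma~\ref{lem:generalization_lp} as a direct specialization of Lemma~\ref{lem:generalization_mu} with a single, carefully chosen value of the free parameter $\mu$. I will set $\mu = e^{p}-1$, so that $(\mu+1)^{2/p}=e^{2}$, which is precisely the factor needed to turn the leading constant $4(\sqrt{2c}+1)(\mu+1)^{2/p}$ from Lemma~\ref{lem:generalization_mu} into the stated $4e^{2}(\sqrt{2c}+1)$. All the rest of the first term in~\eqref{eq:generalization_lp} is identical to the corresponding expression in Lemma~\ref{lem:generalization_mu}, so the multiplicative term will match automatically. The whole remaining work is to bound the cover size
\[
N_{p,h}=\binom{\lceil h/\mu\rceil+h-2}{h-1}=\binom{K+h-2}{K-1},\qquad K:=\lceil h/(e^{p}-1)\rceil,
\]
so as to match the additive $3\sqrt{(\lceil e^{1-p}h-1\rceil\ln(eh)+\ln(\gamma\sqrt{m}/\delta))/m}$.

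Next I would control $K$ by the elementary inequality $e^{p}-1\geq e^{p-1}$, which is valid whenever $p\geq 1$ (since it is equivalent to $e^{p-1}(e-1)\geq 1$, and $e-1>1/e^{p-1}$ for $p\geq 1$). This gives $h/(e^{p}-1)\leq e^{1-p}h$ and hence $K-1\leq \lceil e^{1-p}h\rceil-1=\lceil e^{1-p}h-1\rceil$, which is the $(K-1)$ factor in front of $\ln(eh)$ in the target bound. Then I apply the standard estimate $\binom{n}{k}\leq (en/k)^{k}$ to obtain
\[
\ln N_{p,h}\leq (K-1)\,\ln\!\left(\frac{e(K+h-2)}{K-1}\right).
\]
Provided $K\geq 2$, the algebraic inequality $K+h-2\leq (K-1)h$ (equivalent to $2h-2\leq K(h-1)$, which holds precisely for $K\geq 2$ when $h\geq 2$) yields $(K+h-2)/(K-1)\leq h$, so the logarithm is at most $\ln(eh)$ and I get $\ln N_{p,h}\leq \lceil e^{1-p}h-1\rceil\ln(eh)$. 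The degenerate case $K=1$, which only occurs when $e^{p}-1\geq h$, is trivial since then $N_{p,h}=\binom{h-1}{0}=1$ and the left-hand side is zero.

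Substituting this bound on $\ln N_{p,h}$ into the additive term of Lemma~\ref{lem:generalization_mu} reproduces the additive term of~\eqref{eq:generalization_lp} exactly, completing the proof. I do not anticipate any substantial obstacle: the entire argument is a deliberate parameter choice ($\mu=e^{p}-1$) followed by a routine binomial-coefficient estimate. The only points worth double-checking are the arithmetic inequality $e^{p}-1\geq e^{p-1}$ for $p\geq 2$ and the boundary of the condition $K\geq 2$ needed for $(K+h-2)/(K-1)\leq h$; both are easily verified, with the $K=1$ case handled trivially as noted.
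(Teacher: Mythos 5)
Your proposal is correct and follows essentially the same route as the paper's own proof: both specialize Lemma~\ref{lem:generalization_mu} at $\mu=e^{p}-1$ so that $(\mu+1)^{2/p}=e^{2}$, and both bound $\ln N_{p,h}$ via the symmetric binomial identity $\binom{K+h-2}{h-1}=\binom{K+h-2}{K-1}$ together with $\binom{n}{k}\leq (en/k)^{k}$, handling the degenerate $K=1$ case separately. Your treatment of the boundary cases is if anything slightly more explicit than the paper's.
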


\begin{proof} To prove the lemma, we directly upper bound the generalization bound given in Lemma~\ref{lem:generalization_mu} for $\mu=e^p-1$. For this choice of $\mu$ and $p$, we have $(\mu+1)^{2/p} =e^2$. Furthermore, if $\mu\geq h$, $N_{p,h}=0$, otherwise $\ln N_{p,h}$ is bounded as follows:
	\begin{align*}
	\ln N_{p,h} &= \ln \binom{\left\lceil h/\mu\right\rceil+h-2}{h-1} = \ln \binom{\left\lceil h/\mu\right\rceil+h-2}{\left\lceil h/\mu\right\rceil-1} \leq \ln \left(\left[e\frac{\left\lceil h/\mu\right\rceil+h-2}{\left\lceil h/\mu\right\rceil-1}\right]^{\left\lceil h/\mu\right\rceil-1}\right)\\
	&=(\left\lceil h/(e^p-1)\right\rceil-1)\ln\left(e + e\frac{h-1}{\left\lceil h/(e^p-1)\right\rceil-1}\right) \leq (\left\lceil e^{1-p}h\right\rceil-1)\ln\left(eh\right)
	\end{align*}
Since the right hand side of the above inequality is greater than zero for $\mu\geq h$, it is true for every $\mu>0$.
\end{proof}

\begin{proof}[Proof of Theorem \ref{thm:generalization_lp}]
The proof directly follows from Lemma~\ref{lem:generalization_lp} and using $\tildeO$ notation to hide the constants and logarithmic factors.
\end{proof}

\subsection{Proof of the Lower Bound}\label{sec:proof_lower}

\newcommand{\tbf}{\widetilde{\mathbf{f}}}
\newcommand{\tF}{\widetilde{\mathbf{F}}}
\newcommand{\Diag}[1]{\textrm{Diag}(#1)}

\begin{proof}[Proof of Theorem \ref{thm:lower}]

We will start with the case $h=d=2^k,m = n2^k$ for some $k,n\in \mathN$.

 We will  pick  $\vecV = \balpha^\top = [\alpha_1 \ldots \alpha_{2^k}] $ for every $\bxi$, and   $\calS = \{\vecx_i\}_{i=1}^m$, where $\vecx_i := \vece_{\left\lceil\frac{i}{n}\right\rceil}$. That is, the whole dataset are divides into $2^k$ groups, while each group has $n$ copies of a different element in standard orthonormal basis. 
  
  We further define $\eps_j(\bxi) = \sum\limits_{(j-1)n+1}^{jn} \xi_i$, $\forall j\in[2^k]$ and $\vecF = (\bbf_1,\bbf_2,\ldots,\bbf_{2^k}) \in \{-2^{-k/2},2^{-k/2}\}^{2^k\times 2^k}$ be the Hadamard matrix which satisfies $\inp{\bbf_i}{\bbf_j}=\delta_{ij}$. Note that for $\vecs\in \mathR^d, s_j=\alpha_j\beta_j$, $\forall j\in [d]$, it holds that
  
  \[ \forall i\in [d], \quad \max\{ \inp{\vecs}{\vecf_i}, \inp{\vecs}{-\vecf_i}\} \geq \frac{1}{2}\left( \inp{\vecs}{[\vecf_i]_+} +  \inp{\vecs}{[-\vecf_i]_+}\right) = \frac{\sum_{j=1}^{2^k}s_j |f_{ji}|}{2} = 2^{-\frac{k}{2}-1} \balpha^\top \bbeta. \]
  
  Thus without loss of generality, we can assume $\forall i\in [2^k]$, $\inp{\vecs}{\vecf_i} \geq 2^{\frac{k}{2}-1} \balpha^\top\bbeta$ by flipping the signs of $\vecf_i$. 
  
  
   For any $ \bxi\in \{-1,1\}^n$, let $\Diag{\bbeta}$ be the square diagonal matrix with its diagonal equal to $\bbeta$ and $\tF(\bxi)$ be the following:
   
   \[ \tF(\bxi):= [\tbf_1,\tbf_2,\ldots,\tbf_{2^k}]  \textrm{ such that } \textrm{if }  \eps_i(\bxi)\ge 0, \tbf_i= \bbf_{i}, \textrm{ and } \textrm{if }  \eps_i(\bxi) <0,  \tbf_i=\bm{0}, \]
   
  and  we will choose $\vecU(\bxi)$ as $\Diag{\bbeta} \times \tF(\bxi)$.


Since $\vecF$ is orthogonal, by the definition of $\tF(\bxi)$, we have $\|\tF(\bxi)\|_2\le 1$ and the 2-norm of each row of $\tF$ is upper bounded by 1.  Therefore, we have $\|\vecU(\bxi)\|_2\leq \|\Diag{\bbeta}\|_2\|\tF(\bxi)\|_2\leq \max_i \beta_i$, and $\|\vecu_i-\vecu^0_i\|_2 = \|\vecu^i\|_2 \le \beta_i \|\vece_i^\top \tilde{\vecF(\bxi)}\|_2\le \beta_i $.  In other words, $f(\vecx) = \vecV[\vecU(\bxi)\vecx]_+ \in \calF_{\calW'}$. We will omit the index $\beps$ when it's clear. 

Now we have \[\sum_{i=1}^n \xi_i \vecV  [\vecU \vecx_i]_+ = \sum_{j=1}^{2^k} \sum_{i=(j-1)n+1}^{jn} \xi_i \vecV  [\vecU \vecx_i]_+ = \sum_{j=1}^{2^k} \eps_j(\bxi) \vecV [\vecU \vece_j]_+ .\] 

Note that 

\[
\eps_j \vecV [\vecU \vece_j]_+ =  \eps_j  \inp{\Diag{\bbeta}\balpha}{[\tbf_j ]_+}  = \eps_j  \inp{\vecs}{[\vecf_j ]_+}\bm{1}_{\eps_j > 0 } \geq 2^{-\frac{k}{2}-1} \balpha^\top \bbeta[\eps_j]_+ .
 \]

The last inequality uses the previous assumption, that  $\forall i\in [2^k]$, $\inp{\vecs}{\vecf_i} \geq 2^{-\frac{k}{2}-1} \balpha^\top\bbeta$.

Thus, \[\begin{split}
m\calR_\calS(\calF_{\calW_2}) \geq &\E_{\bxi\sim\{\pm1\}^m}\left[{\sum_{i=1}^m \xi_i \vecV[\vecU(\bxi) \vecx_i]_+} \right]\\
\geq &\balpha^\top\bbeta 2^{-\frac{k}{2}-1}\E_{\bxi\sim\{\pm1\}^m}\left[\sum_{j=1}^{2^k}[\eps_j(\bxi)]_+\right] \\
= &\balpha^\top\bbeta 2^{\frac{k}{2}-1}\E_{\bxi\sim\{\pm1\}^n}\left[[\eps_1(\bxi)]_+ \right]\\
= &\balpha^\top\bbeta 2^{\frac{k}{2}-2}\E_{\bxi\sim\{\pm1\}^n}\left[|\eps_1(\bxi)| \right]\\
\geq &\balpha^\top\bbeta 2^{\frac{k-1}{2}-2}\sqrt{n} \\
= &\frac{\balpha^\top\bbeta\sqrt{2d}}{8}\sqrt{\frac{m}{d}}\\
= &\frac{\balpha^\top\bbeta \sqrt{2m}}{8}
\end{split}\]

where the last inequality is by Lemma~\ref{lem:contraction}.

For arbitrary $d=h\le m$, $d,h,m\in \mathZ_{+}$, let $k= \lfloor \log_2 d\rfloor$, $d'=h'=2^k$, $m' = \lfloor \frac{m}{2^k}\rfloor*2^k$. Then we have $h'\ge \frac{h}{2}, m' \geq \frac{m}{2}$. Thus there exists $S\subseteq [h]$, such that $\sum_{i\in S}\alpha_i\beta_i \geq \sum_{i=1}^h \alpha_i\beta_i$. Therefore we can pick $h'$ hidden units out of $h$ hidden units, $d'$ input dimensions out of $d$ dimensions, $m'$ input samples out of $m$ to construct a lower bound of $\frac{\sum_{i\in S\alpha_i\beta_i}\sqrt{2m'}}{8}\ge \frac{\balpha^\top\bbeta\sqrt{m}}{16}$. 
\end{proof}

\end{document}